


\documentclass{article}

\pagestyle{plain} 
\usepackage{balance} 

\usepackage[margin=3cm]{geometry}

\usepackage{amsmath,amssymb,amsthm,mathtools}
\usepackage[utf8]{inputenc}
\usepackage{xspace}
\usepackage{url}
\usepackage{dsfont}
\usepackage{enumerate}

\usepackage{caption}
\usepackage{subcaption}

\usepackage{balance}
\usepackage[shortcuts]{extdash}

\hyphenation{ana-ly-sis}
\hyphenation{de-li-cate}
\hyphenation{in-di-vi-du-al}
\hyphenation{cross-over}

\usepackage{enumitem}

\usepackage[numbers,longnamesfirst]{natbib}


 \newtheorem{theorem}{Theorem}
 \newtheorem{definition}[theorem]{Definition}
 
 \newtheorem{lemma}[theorem]{Lemma}
 \newtheorem{corollary}[theorem]{Corollary}
 
 \newtheorem{remark}[theorem]{Remark}
 \numberwithin{theorem}{section}

\usepackage{tikz}
\usetikzlibrary{shapes,arrows}
\usetikzlibrary{decorations}
\usetikzlibrary{plotmarks}
\usetikzlibrary{calc}
\usetikzlibrary{mindmap}
\usetikzlibrary{shadows}
\usetikzlibrary{backgrounds}
\usetikzlibrary{shapes}
\usetikzlibrary{shapes.symbols}




\clubpenalty=10000
\widowpenalty=10000

\usepackage[algo2e,ruled,vlined,linesnumbered]{algorithm2e}
\SetAlgoSkip{}

\allowdisplaybreaks[3]

\newcommand*{\om}{\textsc{OneMax}\xspace}

\newcommand*{\onemax}{\om}
\newcommand*{\hottopic}{\textsc{HotTopic}\xspace}

\newcommand*{\jump}{\textsc{Jump}\xspace}
\newcommand*{\leadingones}{\textsc{LeadingOnes}\xspace}

\DeclareMathOperator{\Prob}{Pr}

\newcommand*{\E}{\mathrm{E}}

\newcommand{\N}{\mathds{N}}

\newcommand{\eps}{\varepsilon}
\newcommand{\wellbehaved}{di\-ver\-si\-ty\-/neu\-tral\xspace}

\newcommand{\ones}[1]{\left|#1\right|_1}
\newcommand{\zeros}[1]{\left|#1\right|_0}


\newcommand{\muea}{($\mu$+1)~EA\xspace}
\newcommand{\muga}{($\mu$+1)~GA\xspace}

\newcommand{\cross}{\mathrm{c}}
\newcommand{\op}{\mathrm{op}}

\newcommand{\toggleplot}[1]{{\textcolor{red}{Plots removed to increase compilation speed. Use command $\backslash$toggleplot in preamble to reinsert them.}}}



\newcommand{\added}[1]{\textcolor{black}{#1}}


\author{Johannes Lengler\\
  ETH Z\"urich\\
  Z\"urich, Switzerland
  \and Andre Opris\\
  University of Passau\\
  Passau, Germany
  \and Dirk Sudholt\\
  University of Passau\\
  Passau, Germany
  }



\title{Analysing Equilibrium States for Population Diversity}

\begin{document}

\maketitle

\begin{abstract}
Population diversity is crucial in evolutionary algorithms as it helps with global exploration and facilitates the use of crossover. Despite many runtime analyses showing advantages of population diversity, we have no clear picture of how diversity evolves over time.

We study how population diversity of $(\mu+1)$ algorithms, measured by the sum of pairwise Hamming distances, evolves in a fitness-neutral environment. 
We give an exact formula for the drift of population diversity and show that it is driven towards an equilibrium state. Moreover, we bound the expected time for getting close to the equilibrium state.   
We find that these dynamics, including the location of the equilibrium, are unaffected by surprisingly many algorithmic choices. All unbiased mutation operators with the same expected number of bit flips have the same effect on the expected diversity. Many crossover operators have no effect at all, including all binary unbiased, respectful operators. We review crossover operators from the literature and identify crossovers that are neutral towards the evolution of diversity and crossovers that are not.
\end{abstract}



\textbf{Keywords:} Runtime analysis, diversity, population dynamics

\section{Introduction and Motivation}

Population diversity is an important aspect of evolutionary algorithms~\cite{Squillero2016,Glibovets2013,Crepinsek2013,Shir2012}. A diverse set of solutions helps with exploration, is important for escaping local basins of attraction, and is the basis for efficient use of crossover operators~\cite{sudholt2020benefits}. Several studies confirmed the benefits of explicit diversity-preserving mechanisms on various test functions~\cite{Friedrich2009,Oliveto2018,CovantesOsuna2019runtime,Covantes2019,CovantesOsuna2021,Oliveto2015,Lissovoi2017a}. For some operators like lexicase selection, it is even known that diversity decreases the runtime of this operator~\cite{Helmuth22}.

Many theoretical and practical results show that even low levels of population diversity can improve runtime. Even on the most simple benchmark \onemax, the standard $(2+1)$~GA (with mutation rate $1/n$) is by a constant factor faster than the fastest mutation-based evolutionary algorithm without crossover~\cite{Sudholt2016,Corus2017b,CorusO20}. This is due to the beneficial effects of crossover, which can exploit even small amounts of diversity. For the more complex monotone function \hottopic, \added{the same effect reduces the exponential optimisation time of the \muea to $O(n\log n)$ for the \muga if $\mu$ is a large constant and the algorithms are }started close to the optimum~\cite{lengler2019general}. Finally, it was also shown to benefit memetic (hybrid) evolutionary algorithms on \textsc{Hurdle} functions~\cite{Nguyen2019}.

Examples where crossover between more diverse individuals can help include \textsc{Real Royal Road} functions~\cite{Jansen2005c} and $\jump_k$. For $\jump_k$, it is necessary to cross a fitness valley of size $k$. The \muga can do this with crossover in time $O(4^k)$ if sufficiently diverse individuals exist, while mutation-based operators need $\Omega(n^k)$ trials~\cite{Jansen2002}. However, the original approach by Jansen and Wegener, later improved by K\"otzing, Sudholt and Theile, only showed that sufficiently diverse individuals appear for unrealistically small crossover probability~\cite{Jansen2002,Koetzing2011a}. 
Dang et al.~\cite{Dang2017} showed that a more modest improvement of roughly a factor $n$ is still possible when always performing crossover. This study showed that diversity emerges naturally on the set of all search points with $n-k$ ones, and that on this set crossover serves as a catalyst for boosting population diversity. 
However, the full benefits of crossover can still be obtained if the \muga is equipped with diversity-preserving mechanisms~\cite{Dang2016}.

So there is no shortage of results showing that diversity can be beneficial. Despite these results, our understanding of how population diversity evolves is still very limited. Even on \onemax, for a standard \muea, we do not have a complete picture. While there are upper bounds whose leading constants decrease with~$\mu$ up to $\mu = o(\sqrt{\log n})$~\cite{CorusO20}, lower bounds that are tight including leading constants are only known for $\mu=2$~\cite{Oliveto2022}. For $\jump_k$, empirical results in~\cite{Dang2017} suggest that the improvement by crossover is much larger than the mentioned factor of~$n$ from the theoretical analysis~\cite{Dang2017}. In both scenarios, the main obstacle is understanding the population dynamics and the evolution of diversity.

When considering problems with large degrees of \emph{neutrality}, that is, contiguous regions of the search space (with respect to the Hamming neighbourhood) of equal fitness, or plateaus in the fitness landscape, our understanding of population diversity is also not well developed. Many important problems feature neutrality, and functions with plateaus have been analysed in the literature in the context of runtime analysis of evolutionary algorithms. This includes, for example, (1) the \emph{hidden subset problem}~\cite{cathabard2011non,doerr2017unknown,doerr2019solving,einarsson2019linear}, where the fitness only depends on a small fraction of all variables, and it is not known which variables are relevant and which ones only lead to neutral changes, (2) \emph{majority functions} returning the majority bit value~\cite{Bian2020,DoerrKrejcaArXiv2022}, (3) the \emph{moving Hamming ball} benchmark~\cite{DangJL17} from dynamic optimisation where a Hamming ball around a moving target must be tracked and the fitness areas within and outside of the Hamming ball are both flat, and (4) the \textsc{Plateau}$_k$ function~\cite{AntipovD21,Eremeev20}, a variant of \onemax in which the best $k$ fitness levels are turned into a neutral region, except for the optimum at $\vec{1}$. However, except for~\cite{AntipovD21,Eremeev20} the above results either concern populations of size~1 or do not give detailed insights into the diversity of the population. The aforementioned work on \jump~\cite{Dang2017} does give insights into the population diversity as part of the analysis, however these insights are limited to the specific set of search points with $n-k$ ones.

We aim to initiate the systematic theoretical analysis of population diversity in steady-state algorithms to gain insights into how diversity evolves, how quickly diversity evolves, and which factors play a role in the evolution of diversity. In contrast to previous work, we do not consider functions with specific fitness gradients and take an orthogonal approach. We study how population diversity, defined as the sum of pairwise Hamming distances in the population, evolves in the absence of fitness-based guidance as found in a completely neutral environment, that is, a flat fitness function. 

We consider general classes of \muea{}s and \muga{}s equipped with various mutation and crossover operators. 
As diversity measure $S$, we consider twice the sum of pairwise Hamming distances of population members.
We show that, for all unbiased mutation operators (as will be defined later), the diversity in all \muea{}s is pushed towards an equilibrium state $S_0$ that depends on the population size~$\mu$, the expected number $\chi$ of bits flipped during mutation, and the problem size~$n$:

%
%
%
%
%
\[
    S_0 \coloneqq \frac{(\mu-1)\mu^2 \chi n}{2(\mu-1)\chi + n}.
\]
At this equilibrium the expected Hamming distance between two random population members (with replacement) is roughly $(\mu-1)\chi$ if $2(\mu-1)\chi \ll n$, i.\,e.\ increasing linearly with the population size~$\mu$ and the mutation strength~$\chi$, and roughly $n/2$ if $2(\mu-1)\chi \gg n$, respectively. The term $n/2$ makes sense as this is the expected average Hamming distance in a uniform random population.

We show that, for reasonable parameters, the expected time to decrease the diversity below $(1+\varepsilon) S_0$, with $\varepsilon > 0$ constant, when starting with any larger diversity is bounded by 
$O(\mu^2 \ln n)$. \added{This bound grows very mildly with the problem size~$n$. On the other hand, the expected time to increase diversity above $(1-\varepsilon) S_0$, when starting with less diversity, is 
$O(n\ln n)$, and can thus be larger by a factor $n/\mu^2$ for small values of $\mu$.}


We also show that, surprisingly, the dynamics are to a very large extent independent of the specifics of the algorithm:\footnote{By ``dynamics'', we mean the expected change, the equilibrium value $S_0$, and our upper bounds for the expected time to approach $S_0$.}
\begin{itemize}
    \item For fixed $\chi>0$, every unbiased mutation operator which flips $\chi$ bits in expectation, leads to the same dynamics. For example, standard bit mutation with mutation rate $1/n$ has the same dynamics as RLS mutation using only 1-bit flips.
    \item Large classes of crossover operators, including uniform crossover and $k$-point crossover, have no effect on the dynamics. 
\end{itemize}

For these reasons, we systematically classify which crossover operators have an effect on the dynamics of population diversity. In Section~\ref{sec:equilibria-crossover} we show that crossover operators are neutral with respect to diversity if and only if they satisfy a certain characteristic equation. Consequently, we call such operators \emph{diversity-neutral}. 
In Section~\ref{sec:crossover-operators}, we investigate this property further and show that it is implied if the crossover is respectful with a mask independent of the order of the parents, see Section~\ref{sec:preliminaries} for formal definitions. Moreover, we will show that unbiased crossover operators are diversity-neutral if and only if they are respectful, i.e., if and only if the offspring are in the convex hull of the parents. Finally, in Section~\ref{sec:classification} we apply our classification, building on results from~\citet{Friedrich2022}, to classify five crossover operators from the literature as diversity-neutral, and seven other operators as not diversity-neutral.

\subsection{Motivation for Studying Flat Landscapes}
\label{sec:motivation-for-flat-landscapes}
There are two major motivations for our study of a flat fitness landscape. 
One reason is that, informally, they could provide \emph{upper bounds} on the population diversity that we obtain in many non-neutral environments. While we suspect that counterexamples exist, we also suspect that for many ``natural'' fitness functions, diversity in non-flat environments is generally \emph{lower} than  diversity in flat environments. After all, selection tends to favour individuals which are similar to each other, since it systematically promotes individuals which have a similar trait (namely, high fitness). In contrast, in a flat fitness landscape any new offspring is accepted, allowing the population to spread out without restrictions imposed by the topology of the search space. Thus, there is some hope that the diversity bounds of this paper may still hold as \emph{upper bounds} in many non-neutral environments.

The second reason is that, in addition to \onemax and $\jump_k$ mentioned earlier, there are several processes of interest to the runtime analysis community that feature large degrees of neutrality or very low selective pressure, either continuously or temporarily. 
\begin{itemize}
    \item For the well-known \leadingones function, if the best-so-far fitness value is $k$ then the bits at positions $k+2, \dots, n$ receive no fitness signal and thus this sub-space of the hypercube is a perfectly neutral environment. The dynamics of a \muea or \muga \emph{in accepted steps} are similar to the dynamics studied in the following.
    \item \emph{Clearing}~\cite{CovantesOsuna2019runtime,sudholt2020benefits} is a diversity-preserving mechanism in which an individual of high fitness ``clears'' a region around itself, i.e. the fitness of ``cleared'' individuals is replaced with a plateau of low fitness. The evolution of the population happens on a flat fitness function, except for the fact that winner individuals are guaranteed to survive and continuously spawn offspring close to them.
    \item A similar process can be seen for \muea{}s on \emph{\hottopic} functions. It has been shown that after an improving individual is found, the offspring of this individual may essentially evolve free from selective pressure for a while, as if they were in a fitness-neutral environment. The defects accumulated in this phase cause the $(\mu+1)$ EA to take exponential time on \hottopic if $\mu$ is a large constant~\cite{lengler2021exponential}. 
    \item Selection pressure can also be absent if an evolutionary algorithm uses inappropriate parameter settings or operators that are not suitable. Examples of inefficient parameter settings are given in~\cite{Lehre2010a}. Selection pressure was also found to be nearly absent when using fitness-proportional replacement selection in probabilistic crowding~\cite{Covantes2019} or when using stochastic pure ageing, where individuals are being removed from the population probabilistically~\cite{Oliveto2014}. So our results may be helpful to understand the effects of bad EA designs or parameter choices.
\end{itemize}
 We emphasize that all these scenarios are unique in their own way: fitness plateaus have a topology that may be different from the hypercube; the clearing diversity mechanism continuously injects offspring of the current winners into the population; the phases without selective pressure in \hottopic optimization only last for a certain amount of time. These unique traits do affect the dynamics of population diversity. Therefore, our results only apply partially to those situations. Nevertheless, we believe that our study is a good starting point for better understanding such specific situations.

We remark that in biology, specifically in population genetics, evolution in the absence of selection has been studied as well, justified by the fact that many loci (bits) have little effect on the overall fitness of the organism~\cite[Chapter~3]{kingman1980mathematics} and the hypothesis that evolution is largely driven by genetic drift~\cite{Kimura1979}. According to~\cite{paixao_unified_2015}, the \muea is known in population genetics as the \emph{Moran model} and the diversity measure is known as \emph{gene diversity} according to~\cite{wineberg2003underlying}. Despite these links, the closest related work we were able to identify analyses equilibria for gene frequencies~\cite[Chapter~3]{kingman1980mathematics} known as Hardy-Weinberg equilibria~\cite{Edwards2008}, and nearly all studies consider a fixed, constant number of loci. In contrast, our work deals with equilibria for gene diversity on strings of arbitrary length $n$ (where $n$ is often also used to parameterise the mutation strength). 
Furthermore, our work covers a broader range of mutation and crossover operators, many of which are not found in nature.

\section{Preliminaries}
\label{sec:preliminaries}

\paragraph{Notation:}
For $x,y\in\{0,1\}^n$, the \emph{Hamming distance} $H(x,y)$ is the number of positions in which $x$ and $y$ differ. For $k,\ell\in \N$ with $k\le \ell$ we write $[k]:= \{1,2,\ldots,k\}$ and $[k,\ell] := \{k,\ldots,\ell\}$. By a \emph{flat} (or \emph{fitness-neutral}) fitness function we mean the function $f(x)=0$ for all $x\in \{0,1\}^n$. For $x \in \{0,1\}^n$ we mean by $\ones{x}$ the number of ones and by $\zeros{x}$ the number of zeros, respectively. By $\vec{i} \in \{0,1,2\}^n$ we mean $\vec{i}:=(i,\ldots,i)$ for $i \in \{0,1,2\}$. We write $\ln^+(z) \coloneqq \max\{1,\ln z\}$.

\paragraph{Algorithms:}
We define the following schemes of a steady-state EA without crossover and a steady-state GA using crossover. 
The former starts with some initial population, selects a parent uniformly at random and creates an offspring $y$ through some mutation operator. Then $y$ replaces a worst search point $z$ in the current population if its fitness is no worse than $z$. 

\begin{algorithm2e}[ht]
  $t \gets 0$\\
  Initialise $P_0$ as a multiset of $\mu$ search points.\\
  \While{termination criterion not met}{
    Select $x \in P_t$ uniformly at random.\\
    $y \gets \mathrm{mutation}(x)$.\\
    Select $z \in P_t$ uniformly at random from all search points with minimum fitness in $P_t$.\\
    \If{$f(y) \ge f(z)$}{$P_{t+1} \gets P_t \cup \{y\} \setminus \{z\}$.}
	$t \gets t+1$
}
\caption{Scheme of a Steady-State $(\mu+1)$ EA}
\label{alg:steady-state-EA}
\end{algorithm2e}

The steady-state GA picks two parents uniformly at random with replacement and applies some crossover operator to the two parents, followed by some mutation operator applied to the offspring. The mutant replaces a worst search point from the previous population.

\begin{algorithm2e}[ht]
  $t \gets 0$\\
  Initialise $P_0$ as a multiset of $\mu$ search points.\\
  \While{termination criterion not met}{
    Select $x_1 \in P_t$ uniformly at random.\\
    Select $x_2 \in P_t$ uniformly at random.\\
    $y \gets \mathrm{crossover}(x_1, x_2)$.\label{line:crossover-in-steady-state-GA}\\
    $y' \gets \mathrm{mutation}(y)$.\\
    Select $z \in P_t$ uniformly at random from all search points with minimum fitness in $P_t$.\\
    \If{$f(y') \ge f(z)$}{$P_{t+1} \gets P_t \cup \{y'\} \setminus \{z\}$.}
	$t \gets t+1$
}
\caption{Scheme of a Steady-State $(\mu+1)$ GA}
\label{alg:steady-state-GA}
\end{algorithm2e}

We deliberately do not specify operators for initialisation, crossover and mutation at this point to obtain a scheme that is as general as possible. 
%
Note that Algorithm~\ref{alg:steady-state-GA} chooses two parents with replacement. It is straightforward to adapt our results to selecting parents without replacement (that is, ensuring that two \emph{different} parents are recombined), see Remark~\ref{rem:no-replacement} in Section~\ref{sec:equilibria-crossover}. Parent selection is assumed to be uniform. For our setting, this is no restriction: assuming the fitness function is flat and ties are broken uniformly at random, every selection method based on fitness values or rankings of search points boils down to uniform selection.

Algorithm~\ref{alg:steady-state-GA} is a generalisation of Algorithm~\ref{alg:steady-state-EA}: if we choose a crossover operator that returns an arbitrary parent (called \emph{boring crossover} in~\citep{Friedrich2022}), the algorithm performs a mutation of a parent chosen uniformly at random as in Algorithm~\ref{alg:steady-state-EA}.
It is also straightforward to implement a crossover probability $p_c$, that is, to apply some crossover operator $\cross$ 
with probability $p_c$. In this case the crossover operator in Line~\ref{line:crossover-in-steady-state-GA} of Algorithm~\ref{alg:steady-state-GA} performs a boring crossover with probability $1-p_c$ and otherwise executes $\cross$.

In both schemes, in case of a fitness tie between the offspring and~$z$, the offspring is preferred. This reflects a common strategy and it is useful for exploring plateaus. If the offspring is removed in case of equal fitness, or if ties are broken uniformly, steps removing the offspring are idle steps. If the fitness function is flat and ties are broken uniformly, an idle step occurs with probability $1/(\mu+1)$. Idle steps do not affect the equilibrium states of population diversity, but they slow down the process by a factor $\mu/(\mu+1)$, see Remark~\ref{rem:random-tie-breaking}.


\paragraph{Mutation and Crossover Operators:}
One contribution of the paper is to characterise \emph{diversity-neutral} crossover operators, which we will define in Section~\ref{sec:equilibria-crossover}. 
In preparation for this, we define important properties of mutation and crossover operators.


A $k$-ary operator takes $k$ search points $x_1,\ldots,x_k\in \{0,1\}^n$ as input and outputs $y\in\{0,1\}^n$, where $y$ may be random. For example, mutation operators are unary ($1$-ary) operators, and crossover operators are most often binary ($2$-ary), although crossover operators with higher arity exist as well. We will use the notion of \emph{unbiased} operators by Lehre and Witt~\cite{Lehre2010}. Intuitively, an operator is unbiased if it treats bit values and bit positions symmetrically. Formally, we require the following.

\begin{definition}
A $k$-ary operator $\op(x_1,\ldots,x_k)$ is \emph{unbiased} if the following holds for all $x_1,\ldots,x_k \in \{0,1\}^n$.
Let $D(y \mid x_1,\ldots,x_k) := \Pr(\op(x_1,\ldots,x_k) = y)$. 
\begin{itemize}
    \item [(i)] For every permutation of $n$ bit positions $\sigma$ we have
    \[
    D(y \mid x_1,\ldots,x_k) = D(\sigma(y) \mid \sigma(x_1),\ldots,\sigma(x_k)).
    \]
    \item [(ii)] For every $z \in \{0,1\}^n$ we have ($\oplus$ denoting exclusive OR)
    \[
    D(y \mid x_1,\ldots,x_k) = D(y \oplus z \mid x_1 \oplus z,\ldots, x_k \oplus z).
    \]
\end{itemize}
\end{definition}
Most mutation operators are unbiased, including standard bit mutation and the heavy-tailed mutation operators used in \emph{fast EAs/GAs}~\cite{Doerr2017-fastGA}. Many crossover operators are also unbiased, but not all. A detailed discussion by Friedrich et al. can be found in~\cite{Friedrich2022}.

A crossover operator is \emph{respectful}~\cite{Radcliffe1994} if components on which all parents agree are passed on to the offspring, i.e., the output is in the convex hull of the inputs. \added{For our purposes, the following description via masks is useful.}

\begin{definition}
\label{def:respectful}
A $k$-ary operator $\op$ is \emph{respectful} if it chooses a possibly random mask $a \in [k]^n$ (where the probabilities may depend on the parents) such that the $i$-th bit of $y$ is taken from $x_{a_i}$. 
\end{definition}
Note that the mask in Definition~\ref{def:respectful} is not unique in positions in which parents have the same bit. We will consider respectful operators where the mask does not depend on the order of the parents. Here we restrict ourselves to binary operators.

\begin{definition}\label{def:mask-based}
For a binary respectful operator $\cross$, let $M(a, x_1, x_2)$ be the probability of $\cross(x_1, x_2)$ choosing the mask $a \in \{1, 2\}^n$. We call the mask \emph{order-independent} if $M(a, x_1, x_2) = M(a, x_2, x_1)$ for all $x_1,x_2 \in \{0,1\}^n$ and $a \in \{1, 2\}^n$, and we then say for short that $\cross$ has an \emph{order-independent mask} (OIM).
%
\end{definition}
\added{Since a respectful operator can be described by different masks, it can happen that the same respectful operator can either be described by an order-independent mask or by a mask that does depend on the order. For all our results, the \emph{existence} of an order-independent mask is sufficient, so our results also apply to the case described above.}

A respectful operator trivially has an OIM if the mask is created without considering the parents. Uniform crossover,
biased uniform crossover (where each bit is chosen independently from parent $x_1$ with a given probability $c \in [0, 1]$) and $k$-point crossover
are examples of respectful crossovers with OIM. Note that OIM does not imply symmetry between the parents. For instance, the operator which always returns the first parent, that is, $M(\vec{1}, x_1, x_2) = 1$, has an OIM since the mask $\vec 1$ does not depend on the order of the parents; in fact, it does not depend on the parents at all. On the other hand, the bitwise AND operator is respectful, but does not have an OIM, as for bits where both parents differ, the mask has to reflect the unique parent having a bit value of 0. 
We give a formal proof in Lemma~\ref{lem:counterexample-to-respectful-implies-diversity-neutral}.

\paragraph{Diversity Measure:}\label{sec:diversity}
We consider the sum of Hamming distances as a natural and standard~\cite{wineberg2003underlying} diversity measure:
\begin{definition}
For a population $P_t = \{x_1, \dots, x_{\mu}\}$ and a search point $y \in \{0, 1\}^n$ we define \added{
 \[
     S(y) \coloneqq S_{P_t}(y) \coloneqq \sum_{i=1}^{\mu} H(x_i,y)
 \]
 and
 \[
     S(P_t) \coloneqq \sum_{i=1}^\mu S_{P_t}(x_i) = \sum_{i=1}^\mu \sum_{j=1}^\mu H(x_i, x_j).
 \]}
The double sum includes the Hamming distance of each pair $x_i, x_j$ with $i \neq j$ twice. If instead we sum over all $(i,j)$ with $i<j$, we would obtain $S(P_t)/2$. Other rescalings are also interesting. The average value of $S(y)$ with $y\in P_t$ is $S(P_t)/\mu$. The expected Hamming distance of two uniform random points in $P_t$ drawn with replacement is $S(P_t)/\mu^2$, and without replacement it is ${S(P_t)/(\mu^2-\mu)}$. Since those values differ only by a fixed factor from $S(P_t)$, all our results transfer straightforwardly to these other measures.
\end{definition}

The sum of Hamming distances is one of the oldest and most popular diversity metrics~\cite{wineberg2003underlying}. It can be calculated with $O(\mu n)$ operations~\cite{wineberg2003underlying}, which is linear in the input size and hence optimal for all diversity measures that take into account all of a population's genetic information. The idea is simply to count for each bit position~$i$ how many individuals have a 1 at position~$i$. If this number is $c_i$, the contribution to $S(P_t)$ is $c_i (\mu-c_i)$ as this is the number of pairs of population members that have different values at bit~$i$. Consequently, $S(P_t) = \sum_{i=1}^n c_i (\mu-c_i)$. In the context of a \muea, the vector $c_1, \dots, c_n$ and thus $S(P_t)$ can be updated after one generation with $O(n)$ operations, which is again optimal. According to~\cite{Ulrich2012}, the sum of Hamming distances has two desirable properties. Firstly, diversity increases when adding a new search point that is not yet contained in the population (called \emph{monotonicity in species}~\cite{SolowPolasky98}). Secondly, the diversity does not decrease when replacing the population with one where all pairs of solutions have a distance at least as large as the previous one (\emph{monotonicity in distance}~\cite{SolowPolasky98}). 
It does not fulfil the \emph{twinning} property, stating that diversity should remain constant when adding a clone of a search point into the population~\cite{SolowPolasky98}, and it may be maximised by a population forming clusters of search points such that the clusters have a large Hamming distance~\cite{Ulrich2012}.


\paragraph{On stationary distributions and mixing times:} A steady-state \muea or \muga can be described by a Markov chain over the state space of all possible populations. For most mutation and crossover operators, the Markov chain is irreducible if the algorithm runs on a flat fitness landscape. For example, standard bit mutation has a non-zero probability to create any offspring $y$ from any parent $x$. Thus, there is a positive probability of creating any population $P_2$ from any initial population $P_1$ in a sequence of at most $\mu$ generations. The Markov chain is also usually aperiodic since there is a positive probability of adding a clone of the search point being removed, and hence there is a positive self-loop probability. In this case, by the fundamental theorem of Markov chains~\cite[Theorem~6.2]{Motwani1995}, there exists a unique stationary distribution. The expected time to approach the stationary distribution is called \emph{mixing time} and there is a well-established machinery for bounding mixing times (see, e.\,g.~\cite{Levin2008}).  However, the Markov chain lives on the space of all possible populations, which has size $2^{n\mu}$, and even a logarithmic mixing time would be of order $\Omega(n\mu)$. Compared to this, our bound for approaching or crossing the equilibrium state from above is $O(\mu^2\ln^+(n/\mu))$, which can be much smaller. We do not believe that such results can be directly deduced from mixing times.

Of course, the diversity $S(P_t)$ performs a random walk on a much smaller state space. But this is in general not a Markov chain, since there may be very different populations having the same overall diversity, and the possible values of $S(P_{t+1})$ depend on the details of the populations, not only on the value $S(P_t)$.

\section{Drift of Population Diversity for Steady-State EAs without Crossover}
\label{sec:equilibrium-states}

Now we will compute the expected change of $S(P_t)$, i.e., $\E(S(P_{t+1}))$ for a given $P_t$. We break the process down into several steps, and work out 
a unifying formula for a very general situation, see Corollary~\ref{cor:equilibrium-neutral}. This includes the \muea for any unbiased mutation operator (Theorem~\ref{the:equilibrium-mutation-only}), but as we will see later in Section~\ref{sec:equilibria-crossover}, it also includes the \muga with a large variety of crossover operators.

We start with a lemma which describes the expected change for a fixed value of the offspring $y'$.

\begin{lemma}
\label{lem:how-E-S-P-t-plus-1-is-derived}
Consider a population $P_t = \{x_1, \dots, x_\mu\}$ and a search point $y'\in\{0,1\}^n$. Let $P_{t+1} \coloneqq P_t \cup \{y'\} \setminus \{x_d\}$ for a uniformly random $d \in [\mu]$. Then
\[
    \E(S(P_{t+1})) = \left(1 - \frac{2}{\mu}\right) S(P_t) + \frac{2(\mu-1)}{\mu} S_{P_t}(y').
\]
\end{lemma}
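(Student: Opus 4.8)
The plan is to compute $S(P_{t+1})$ exactly for a fixed choice of $d$, and then average over the uniform random $d \in [\mu]$. Writing $S(P_t) = \sum_{i=1}^\mu \sum_{j=1}^\mu H(x_i, x_j)$ as a double sum over ordered pairs, I would isolate the contribution of the index $d$ that gets replaced. Since $H(x_d, x_d) = 0$, the terms of this double sum that involve index $d$ (as the first or as the second argument) sum to exactly $2 S_{P_t}(x_d)$, while the terms with neither argument equal to $d$ are left untouched by the replacement.

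When $x_d$ is replaced by $y'$, these index-$d$ terms become $\sum_{j \ne d} H(y', x_j) + \sum_{i \ne d} H(x_i, y')$. Each of these two sums equals $S_{P_t}(y') - H(y', x_d)$, since $S_{P_t}(y') = \sum_{i=1}^\mu H(x_i, y')$ ranges over all of $P_t$ including $x_d$, whereas in $P_{t+1}$ the search point $x_d$ is gone. Hence for fixed $d$,
\[
    S(P_{t+1}) = S(P_t) - 2 S_{P_t}(x_d) + 2 S_{P_t}(y') - 2 H(y', x_d).
\]
Taking expectations over the uniform random $d$ and using the two averaging identities $\frac{1}{\mu} \sum_{d=1}^\mu S_{P_t}(x_d) = \frac{1}{\mu} S(P_t)$ (which is just the definition of $S(P_t)$) and $\frac{1}{\mu} \sum_{d=1}^\mu H(y', x_d) = \frac{1}{\mu} S_{P_t}(y')$ yields
\[
    \E(S(P_{t+1})) = S(P_t) - \frac{2}{\mu} S(P_t) + 2 S_{P_t}(y') - \frac{2}{\mu} S_{P_t}(y'),
\]
and collecting the coefficient $2 - 2/\mu = 2(\mu-1)/\mu$ of $S_{P_t}(y')$ gives the claim.

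The calculation is elementary, so the only real risk is a bookkeeping error, and the two points that need care are the following. First, the factor of $2$ multiplying $S_{P_t}(x_d)$ arises because the symmetric double-sum definition of $S$ counts each ordered pair, so index $d$ contributes once as a row and once as a column (the diagonal term vanishes and causes no double counting). Second, the correction $-2 H(y', x_d)$ must be subtracted precisely because the removed point $x_d$ should contribute no distance to the newly inserted $y'$, even though $S_{P_t}(y')$ by definition still includes the term $H(y', x_d)$. Getting both the factor of $2$ and this correction right is exactly what makes the two $S_{P_t}(y')$ coefficients combine into $2(\mu-1)/\mu$ rather than $2$.
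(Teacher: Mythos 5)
Your proposal is correct and follows essentially the same route as the paper's proof: compute $S(P_{t+1})$ exactly for each fixed $d$ by isolating the terms of the double sum involving index $d$, then average over the uniform choice of $d$. The only cosmetic difference is that you package the per-$d$ result as $S(P_t) - 2S_{P_t}(x_d) + 2S_{P_t}(y') - 2H(y',x_d)$ before averaging, whereas the paper keeps the sums $\sum_{i\neq d}$ explicit and absorbs the missing term during the averaging step; the bookkeeping is identical.
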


\begin{proof}
For any fixed~$d \in [\mu]$, let $P_{t+1}^{-d} \coloneqq P_t \cup \{y'\} \setminus \{x_d\}$. Then
\begin{align}
    & S(P_{t+1}^{-d}) = \sum_{z\in P_{t+1}^{-d}} \sum_{z'\in P_{t+1}^{-d}} H(z, z').\nonumber\\
    \intertext{The double sum contains summands $H(y', x_j)$ for all $j \in [\mu] \setminus \{d\}$ and summands $H(x_i, y')$ for all $i \in [\mu] \setminus \{d\}$ as well as a summand $H(y', y') = 0$. By virtue of $H(x_i, x_j) = H(x_j, x_i)$, this equals}
    & = \sum_{i=1, i \neq d}^{\mu} \sum_{j=1, j \neq d}^{\mu} H(x_i, x_j) + 2\sum_{i=1, i \neq d}^{\mu} H(x_i, y').\nonumber\\
    \intertext{Compared to $S(P_t)$, the double sum is missing summands $H(x_{d}, x_j)$ for all $j \in [\mu] \setminus \{d\}$ and summands $H(x_i, x_{d})$ for all $i \in [\mu] \setminus \{d\}$ as well as a summand $H(x_{d}, x_{d}) = 0$. Thus, this is equal to}
    & = \sum_{i=1}^{\mu} \sum_{j=1}^{\mu} H(x_i, x_j) + 2\sum_{i=1, i \neq d}^{\mu} H(x_i, y') - 2\sum_{i=1}^{\mu} H(x_i, x_d)\nonumber\\
    & = S(P_t) + 2\sum_{i=1, i \neq d}^{\mu} H(x_i, y') - 2\sum_{i=1}^{\mu} H(x_i, x_d)\label{eq:derivation-of-E-S-P-t}.
\end{align}
Owing to the uniform choice of~$d$, we get
\begin{align*}
    & \E(S(P_{t+1})) = \frac{1}{\mu} \sum_{d=1}^{\mu} S(P_{t+1}^{-d})\nonumber\\
    & = S(P_t) + \frac{2}{\mu} \sum_{d=1}^{\mu} \sum_{i=1, i \neq d}^{\mu} H(x_i, y') - \frac{2}{\mu} \sum_{d=1}^{\mu} \sum_{i=1}^{\mu} H(x_i, x_d).\\
    \intertext{The first double sum contains terms $H(x_i, y')$ for all $i \in [\mu]$ exactly $\mu-1$ times. The second double sum equals $S(P_t)$. Thus,}
    & = \left(1 - \frac{2}{\mu}\right) S(P_t) + \frac{2(\mu-1)}{\mu} \sum_{i=1}^{\mu} H(x_i, y'). \qedhere
\end{align*}
\end{proof}

The next lemma tells us how, for given $x,z$, mutating $x$ changes the distance from a fixed search point $z$ in expectation. Interestingly, if the mutation operator is unbiased then the result depends only on the expected number of bit flips, but not on the exact nature of the mutation operator. 

\begin{lemma}\label{lem:unbiased-mutation}
Let $x, z\in \{0,1\}^n$, and let $y$ be created from $x$ by an unbiased mutation operator that flips $\chi$ bits in expectation. Then
\[
    \E(H(z,y)) = \chi +  \left(1 - \frac{2\chi}{n}\right)H(z,x).
\]
\end{lemma}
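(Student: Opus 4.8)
The plan is to decompose the Hamming distance $H(z,y)$ position by position and exploit the fact that unbiasedness forces the marginal flip probability to be identical across all $n$ bits. Write $d := H(z,x)$. For each position $i \in [n]$, let $F_i$ denote the indicator that bit $i$ is flipped when creating $y$ from $x$, so that $y_i = x_i \oplus F_i$. The key observation is that position $i$ contributes to $H(z,y)$ exactly when $y_i \neq z_i$, and this splits into two complementary regimes: on the $n-d$ positions where $x_i = z_i$, we have $y_i \neq z_i$ iff bit $i$ is flipped; on the $d$ positions where $x_i \neq z_i$, we have $y_i \neq z_i$ iff bit $i$ is \emph{not} flipped. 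Hence $H(z,y) = \sum_{i:\,x_i = z_i} F_i + \sum_{i:\,x_i \neq z_i} (1 - F_i)$.

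The crux is to determine $\E(F_i)$ for each $i$. First I would apply part (ii) of the unbiasedness definition with the choice $z = x$ to show that the distribution of the flip pattern $x \oplus y$ equals the distribution obtained by mutating $\vec 0$, and in particular does not depend on $x$. Then I would invoke part (i), invariance under permutations of the bit positions, to conclude that this flip-pattern distribution is exchangeable, so that $\Pr(F_i = 1)$ takes the same value $p$ for every position $i$. Summing over all positions and using linearity of expectation yields $np = \sum_{i} \E(F_i) = \E(\#\text{flipped bits}) = \chi$, whence $p = \chi/n$.

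Plugging $\E(F_i) = \chi/n$ into the decomposition and applying linearity of expectation then gives
\[
    \E(H(z,y)) = (n-d)\cdot\frac{\chi}{n} + d\cdot\left(1 - \frac{\chi}{n}\right) = \chi + d\left(1 - \frac{2\chi}{n}\right),
\]
and substituting $d = H(z,x)$ establishes the claim after collecting terms.

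The main obstacle is the middle step: rigorously deducing from the two unbiasedness axioms that every bit is flipped with the same marginal probability $\chi/n$. This is where essentially all the content lies, since the position-by-position split and the final arithmetic are purely mechanical once the uniform marginal is in hand.
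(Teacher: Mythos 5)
Your proof is correct and takes essentially the same approach as the paper's: both establish that every bit has marginal flip probability $\chi/n$ and then split positions into those where $x$ and $z$ agree (a flip increases the distance) and those where they differ (a flip decreases it). The only difference is that you spell out why unbiasedness forces a uniform marginal (shift-invariance with shift $x$ to reduce to mutating $\vec{0}$, then permutation-invariance for exchangeability), a step the paper compresses into the single assertion ``by unbiasedness, $p_i = p_j$ for all $i,j$''.
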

\begin{proof}
Let $p_i$ be the probability of flipping the $i$-th bit of $x$. By unbiasedness, we have $p_i = p_j$ for all $i,j \in [n]$. We also have $\sum_{i=1}^n p_i = \chi$. Since all $p_i$ are equal, this implies $n p_i = \chi$, or $p_i = \chi/n$.

There are $H(z,x)$ positions on which $x$ and $z$ differ. In expectation, $\chi/n \cdot H(z,x)$ of them are flipped, and each flip decreases the distance from $z$ by one. There are $n-H(z,x)$ positions on which $x$ and $z$ agree. Each such flip \emph{in}creases the distance from $z$ by one, and their expectation
is
$\chi/n \cdot (n-H(z,x))$. Hence, $\E(H(z,y))$ equals
\begin{align*}
H(z,x) - \frac{\chi H(z,x)}{n} + \frac{\chi(n-H(z,x))}{n} 
    =\;& \chi +  \left(1 - 2\chi/n\right)H(z,x). \qedhere
\end{align*}
\end{proof}

Lemmas~\ref{lem:how-E-S-P-t-plus-1-is-derived} and~\ref{lem:unbiased-mutation} together allow us to derive how the diversity evolves if we create the offspring as a mutation of a given string $y$.

\begin{theorem}
\label{the:equilibrium-general}
Consider a population $P_t = \{x_1, \dots, x_\mu\}$ and let $y\in \{0,1\}^n$. Let $y'$ be obtained from $y$ by an unbiased mutation operator which flips $\chi$ bits in expectation. Let $P_{t+1} = P_t \cup \{y'\} \setminus \{x_d\}$ for a uniform random $d \in [\mu]$. Then
\begin{align*}
    &\E(S(P_{t+1})) =\left(1 - \frac{2}{\mu}\right) S(P_t) + 2(\mu-1)\chi
     + \frac{2(\mu-1)}{\mu}\left(1 - \frac{2\chi}{n}\right)  S(y).
\end{align*}
\end{theorem}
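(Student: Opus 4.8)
The plan is to combine the two preceding lemmas by conditioning on the outcome of the mutation. First I would note that Lemma~\ref{lem:how-E-S-P-t-plus-1-is-derived} holds for \emph{every} fixed value of the offspring $y'$, and that the random index $d\in[\mu]$ determining which individual is removed is drawn independently of the mutation randomness. Therefore I would apply that lemma conditionally on $y'$ and then take the expectation over the mutation that produces $y'$ from $y$. Using the tower property (conditional expectation) and linearity, the term $\left(1-\tfrac{2}{\mu}\right)S(P_t)$ is deterministic given $P_t$ and passes through unchanged, while $S_{P_t}(y')$ gets replaced by its expectation, giving
\[
    \E(S(P_{t+1})) = \left(1 - \frac{2}{\mu}\right) S(P_t) + \frac{2(\mu-1)}{\mu}\,\E\bigl(S_{P_t}(y')\bigr).
\]

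Next I would evaluate $\E(S_{P_t}(y'))$. By definition $S_{P_t}(y') = \sum_{i=1}^{\mu} H(x_i, y')$, so by linearity of expectation $\E(S_{P_t}(y')) = \sum_{i=1}^{\mu} \E(H(x_i, y'))$. Here $y'$ is the mutation of the \emph{fixed} string $y$, so I would apply Lemma~\ref{lem:unbiased-mutation} with $z = x_i$ and the parent being $y$, obtaining $\E(H(x_i, y')) = \chi + \bigl(1 - \tfrac{2\chi}{n}\bigr) H(x_i, y)$ for each $i$. Summing over $i$ and recognising $\sum_{i=1}^{\mu} H(x_i,y) = S_{P_t}(y) = S(y)$ yields
\[
    \E\bigl(S_{P_t}(y')\bigr) = \mu\chi + \left(1 - \frac{2\chi}{n}\right) S(y).
\]
Substituting this into the previous display and simplifying $\tfrac{2(\mu-1)}{\mu}\cdot \mu\chi = 2(\mu-1)\chi$ gives exactly the claimed expression.

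There is no real obstacle here; the theorem is essentially a composition of the two lemmas. The only point requiring a little care is the conditioning argument: one must verify that Lemma~\ref{lem:how-E-S-P-t-plus-1-is-derived} is stated for an \emph{arbitrary fixed} $y'$ (so it may be invoked conditionally) and that the choice of $d$ is independent of the mutation, so that taking expectations over the two sources of randomness factorises cleanly and the expectation may be pushed inside the sum defining $S_{P_t}(y')$. Once this is noted, the remainder is routine algebraic substitution.
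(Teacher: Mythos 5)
Your proposal is correct and follows essentially the same route as the paper's proof: condition on the mutation outcome, apply Lemma~\ref{lem:how-E-S-P-t-plus-1-is-derived}, then use Lemma~\ref{lem:unbiased-mutation} with $z=x_i$ summed over $i$ to compute $\E(S_{P_t}(y')) = \mu\chi + \bigl(1-\tfrac{2\chi}{n}\bigr)S(y)$, and substitute. Your extra care in justifying the conditioning (that the lemma holds for every fixed $y'$ and that $d$ is independent of the mutation) is a slightly more explicit rendering of what the paper compresses into ``the law of total probability and linearity of expectation.''
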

 \begin{proof}
     Note that $S(y)= \sum_{i=1}^\mu H(x_i,y)$. 
     Then by Lemma~\ref{lem:how-E-S-P-t-plus-1-is-derived}, the law of total probability and linearity of expectation
 \begin{align}\label{eq:general-drift1}
     \E(S(P_{t+1})) = \left(1 - \frac{2}{\mu}\right) S(P_t) + \frac{2(\mu-1)}{\mu} \E(S(y')).
 \end{align}
 On the other hand, by Lemma~\ref{lem:unbiased-mutation} and again linearity of expectation, for all $i\in [n]$,
 \begin{align}\label{eq:general-drift2}
     \E(H(x_i,y')) = \chi + \left(1 - \frac{2\chi}{n}\right)  H(x_i,y).
 \end{align}
 Summing~\eqref{eq:general-drift2} over all $i$ yields
 \begin{align}\label{eq:general-drift3}
     \E(S(y')) & = \sum_{i=1}^\mu\E(H(x_i,y')) \nonumber\\
     & = \mu\chi + \left(1 - \frac{2\chi}{n}\right)  \sum_{i=1}^\mu H(x_i,y) \nonumber\\
     & = \mu \chi + \left(1 - \frac{2\chi}{n}\right)  S(y).
 \end{align}
 Plugging~\eqref{eq:general-drift3} into~\eqref{eq:general-drift1} yields the theorem.
 \end{proof}

Remarkably, Theorem~\ref{the:equilibrium-general} depends only on $S(y)$, not on $y$ itself. This means that all $y$ with the same value of $S(y)$ yield the same dynamics. 
Moreover, by linearity of expectation the same still holds with $\E(S(y))$ if $y$ is random. The following corollary describes the special case that $\E(S(y)) = S(P_t)/\mu$. As we will see later, this special case covers many interesting situations. In particular, it covers the \muea, where the parent is chosen at random, and it covers the \muga with any unbiased, respectful crossover operator.

\begin{corollary}
\label{cor:equilibrium-neutral}
Consider a population $P_t = \{x_1, \dots, x_\mu\}$. Consider any process that
\begin{enumerate}
    \item creates $y$ by any random procedure such that $\E(S(y)) = S(P_t)/\mu$; 
    \item creates $y'$ from $y$ by an unbiased mutation operator which flips $\chi$ bits in expectation;
    \item sets $P_{t+1} = P_t \cup \{y'\} \setminus \{x_d\}$ for a uniformly random $d \in [\mu]$.
\end{enumerate}
Then
\begin{align*}
    \E(S(P_{t+1})) =\;& \left(1 - \frac{2}{\mu^2} - \frac{4(\mu-1)\chi}{\mu^2 n}\right)S(P_t) + 2 (\mu-1) \chi.
\end{align*}
\end{corollary}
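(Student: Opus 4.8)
The plan is to reduce the corollary directly to Theorem~\ref{the:equilibrium-general} by conditioning on the random intermediate point $y$ and then averaging. Theorem~\ref{the:equilibrium-general} already gives the expected diversity after mutation and removal for an \emph{arbitrary fixed} starting point $y$, and its right-hand side is an affine function of the single quantity $S(y)$. Since steps~2 and~3 of the process in the corollary are exactly the setup of Theorem~\ref{the:equilibrium-general} (an unbiased mutation flipping $\chi$ bits in expectation, followed by removal of a uniformly random $x_d$), the only new ingredient is that $y$ is now produced by a random procedure in step~1.

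Concretely, I would first condition on the value of $y$ and apply Theorem~\ref{the:equilibrium-general} to obtain
\[
    \E(S(P_{t+1}) \mid y) = \left(1 - \frac{2}{\mu}\right) S(P_t) + 2(\mu-1)\chi + \frac{2(\mu-1)}{\mu}\left(1 - \frac{2\chi}{n}\right) S(y).
\]
Because this expression is affine in $S(y)$ and the other terms do not depend on $y$, taking the expectation over the randomness in $y$ and using the tower property together with linearity of expectation simply replaces $S(y)$ by $\E(S(y))$. This is the step the remark immediately before the corollary is flagging (``the same still holds with $\E(S(y))$ if $y$ is random''), so no further probabilistic work is needed.

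The remaining step is to insert the hypothesis $\E(S(y)) = S(P_t)/\mu$ and collect the coefficient of $S(P_t)$. This gives a coefficient
\[
    1 - \frac{2}{\mu} + \frac{2(\mu-1)}{\mu^2}\left(1 - \frac{2\chi}{n}\right),
\]
and the only thing to verify is the algebraic identity $-\frac{2}{\mu} + \frac{2(\mu-1)}{\mu^2} = -\frac{2}{\mu^2}$, after which the coefficient becomes $1 - \frac{2}{\mu^2} - \frac{4(\mu-1)\chi}{\mu^2 n}$, matching the claimed formula while the additive term $2(\mu-1)\chi$ is carried through unchanged.

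I do not anticipate a genuine obstacle here: the content is entirely in Theorem~\ref{the:equilibrium-general}, and the corollary is an instantiation plus a one-line simplification. The only point worth stating carefully is the justification for passing from fixed $y$ to random $y$, namely that Theorem~\ref{the:equilibrium-general} is affine in $S(y)$ so that the expectation commutes through; everything else is routine bookkeeping of the $1/\mu$ and $1/\mu^2$ terms.
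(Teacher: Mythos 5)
Your proposal is correct and follows essentially the same route as the paper's own proof: apply Theorem~\ref{the:equilibrium-general} to the random $y$, use linearity of expectation (the affine dependence on $S(y)$) to replace $S(y)$ by $\E(S(y)) = S(P_t)/\mu$, and simplify the coefficient of $S(P_t)$. The algebraic identity you single out, $-\tfrac{2}{\mu} + \tfrac{2(\mu-1)}{\mu^2} = -\tfrac{2}{\mu^2}$, is exactly the cancellation the paper performs in its final display.
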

\begin{proof}
    We apply Theorem~\ref{the:equilibrium-general} with a random $y$. By linearity of expectation,
    \begin{align*}
    \E(S(P_{t+1})) & =\left(1 - \frac{2}{\mu}\right) S(P_t) + 2(\mu-1)\chi
     + \frac{2(\mu-1)}{\mu}\left(1 - \frac{2\chi}{n}\right) \E(S(y)) \\
    &  =\left(1 - \frac{2}{\mu}\right) S(P_t) 
     + 2(\mu-1)\chi+\frac{2(\mu-1)}{\mu}\left(1 - \frac{2\chi}{n}\right) \frac{S(P_t)}{\mu} \\
     &  = \left(1 - \frac{2}{\mu}+ \frac{2}{\mu} - \frac{2}{\mu^2} - \frac{4(\mu-1)\chi}{\mu^2 n}\right)S(P_t) 
     + 2(\mu-1)\chi,
\end{align*}
and cancelling the $2/\mu$-terms yields the corollary.
\end{proof}

As an immediate consequence,  
the \muea with any unbiased mutation operator meets the conditions of Corollary~\ref{cor:equilibrium-neutral}. 

\begin{theorem}
\label{the:equilibrium-mutation-only}
Consider any \muea from Algorithm~\ref{alg:steady-state-EA} with any unbiased mutation operator flipping $\chi$ bits in expectation and a population size of~$\mu$ on a flat fitness function. Then for all populations~$P_t$
\begin{align*}
    \E(S(P_{t+1})) =\;& \left(1 - \frac{2}{\mu^2} - \frac{4(\mu-1)\chi}{\mu^2 n}\right)S(P_t) + 2 (\mu-1)\chi.
\end{align*}
\end{theorem}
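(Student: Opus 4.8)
The plan is to observe that the \muea from Algorithm~\ref{alg:steady-state-EA}, run on a flat fitness function, is exactly an instance of the abstract three-step process described in Corollary~\ref{cor:equilibrium-neutral}; the entire theorem then follows by invoking that corollary. So the work reduces to checking its three hypotheses, none of which is substantial.

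First I would handle the replacement step (condition~3). On a flat fitness function every individual has fitness $0$, so the acceptance test $f(y') \ge f(z)$ is always satisfied and the offspring is never rejected; hence $P_{t+1} = P_t \cup \{y'\} \setminus \{z\}$. Moreover, since all $\mu$ search points share the (minimum) fitness $0$, the removed individual $z$ is drawn uniformly at random from the whole population, i.e.\ $z = x_d$ for a uniform $d \in [\mu]$. Crucially, this choice of $d$ is made independently of the parent selection and of the mutation, which is exactly the setting of Lemma~\ref{lem:how-E-S-P-t-plus-1-is-derived} and therefore of Corollary~\ref{cor:equilibrium-neutral}. Condition~2 is immediate, since the algorithm applies the given unbiased mutation operator (flipping $\chi$ bits in expectation) to the selected parent.

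The only computation is the verification of condition~1. Here $y$ is the parent before mutation, chosen uniformly at random from $P_t$, so $y = x_j$ with $j$ uniform in $[\mu]$. Using the definition of $S_{P_t}$,
\[
\E(S(y)) = \frac{1}{\mu}\sum_{j=1}^{\mu} S_{P_t}(x_j) = \frac{1}{\mu}\sum_{j=1}^{\mu}\sum_{i=1}^{\mu} H(x_i, x_j) = \frac{S(P_t)}{\mu},
\]
which is precisely the required identity. With all three conditions in place, Corollary~\ref{cor:equilibrium-neutral} yields the claimed expression for $\E(S(P_{t+1}))$ verbatim.

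There is essentially no hard step: the theorem is a direct specialisation of the corollary. The only points that warrant a little care are (i) confirming that on a flat landscape the offspring is never rejected and the removed index $d$ is uniform \emph{and independent} of the parent, so that the averaging over $d$ in Lemma~\ref{lem:how-E-S-P-t-plus-1-is-derived} remains valid once $y$ is made random in Corollary~\ref{cor:equilibrium-neutral}; and (ii) the elementary double-sum manipulation establishing $\E(S(y)) = S(P_t)/\mu$.
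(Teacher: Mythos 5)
Your proposal is correct and follows exactly the paper's own route: the paper likewise derives Theorem~\ref{the:equilibrium-mutation-only} as an immediate consequence of Corollary~\ref{cor:equilibrium-neutral}, verifying condition~1 via the same double-sum computation $\E(S(y)) = \tfrac{1}{\mu}\sum_{i=1}^\mu\sum_{j=1}^\mu H(x_i,x_j) = S(P_t)/\mu$. Your additional checks that on a flat landscape the offspring is always accepted and the removed individual is uniform and independent of the parent are left implicit in the paper, but they are exactly the right points to confirm.
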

\begin{proof}
   This is an immediate consequence of Corollary~\ref{cor:equilibrium-neutral}, where $y\in P_t$ is chosen randomly, since such a random parent $y$ satisfies
\[
    \E(S(y)) = \frac{1}{\mu}\sum_{i=1}^\mu \sum_{j=1}^\mu H(x_i, x_j) = \frac{S(P_t)}{\mu}. \qedhere
\]
\end{proof}

\section{Equilibria and Time Bounds}
\label{sec:stabilisation-time}


The preceding results give immediate insights about an equilibrium state for the population diversity. Define 
\begin{align}\label{eq:mu-and-alpha}
\alpha:=2(\mu-1)\chi\quad \text{and}\quad \delta:=\frac{2}{\mu^2}+\frac{4 (\mu-1)\chi}{\mu^2 n},
\end{align}
then Corollary~\ref{cor:equilibrium-neutral} and Theorem~\ref{the:equilibrium-mutation-only} state that
\begin{equation}
\label{eq:drift-with-alpha-and-delta}
    \E(S(P_{t+1})) = (1-\delta)S(P_t) + \alpha.
\end{equation}
This condition was described in~\cite{DoerrNegativeMultiplicativeDrift} as \emph{negative multiplicative drift with an additive disturbance} (in~\cite{DoerrNegativeMultiplicativeDrift} only lower hitting time bounds were given, while we will prove upper bounds). 
An equilibrium state with zero drift is attained for 
\[
    S_0 \coloneqq \frac{\alpha}{\delta} = \frac{(\mu-1)\mu^2 \chi n}{2(\mu-1)\chi + n}
\]
since then
$
    \E(S(P_{t+1}) \mid S(P_t) = S_0) = (1-\delta) \cdot \frac{\alpha}{\delta} + \alpha = \frac{\alpha}{\delta} = S_0
$.

If $(\mu-1)\chi \ll n$ then the equilibrium is close to $(\mu-1)\mu^2\chi$ and the average Hamming distance is $(\mu-1)\chi$, growing linearly in the population size and linearly in the mutation strength~$\chi$.
If $(\mu-1)\chi \gg n$ then the equilibrium is close to $\mu^2 n/2$, that is, the average Hamming distance between population members is roughly $n/2$. This equals the expected Hamming distance between population members in a uniform random population.
Since $2(\mu-1)\chi + n \ge \max\{2(\mu-1)\chi, n\}$, the average Hamming distance is at most
\[
    \alpha/(\delta \mu^2)  \le \min\left\{(\mu-1)\chi, n/2\right\}
\]
hence bounded by the value $n/2$ for a uniform random population.


We stress again that for given $\mu$ and $n$, the equilibrium value $\alpha/\delta$ only depends on the expected number $\chi$ of flipped bits. For example, both RLS mutation, which flips exactly one bit, and standard bit mutation with mutation rate $1/n$ have the same value $\chi =1$ and hence the same equilibrium state. Recently, another type of mutation operator has become quite popular, where the probability $p_k$ of flipping $k$ bits has a \emph{heavy tail}~\cite{Doerr2017-fastGA}. Usually, it scales as $p_k \sim k^{-\tau}$ for some $\tau >1$. In many applications, all values of $\tau$ lead to similar results. However, here they lead to qualitatively different behaviour due to different values of $\chi$. More precisely, $\tau > 2$ leads to $\chi = \Theta(1)$~\cite{newman2005power}, which gives the same dynamics as standard bit mutation with slightly different mutation rate $\Theta(1/n)$. In particular, $\alpha/\delta = \Theta(\mu^3)$ for $\mu \le n$. On the other hand, $\tau \in (1,2)$ leads to 
$\chi = \Theta( \sum_{k=1}^{n} k\cdot p_k) = \Theta(\sum_{k=1}^n k^{1-\tau}) = \Theta(n^{2-\tau})$.
Thus for $\mu \le n^{\tau-1}$ the equilibrium state is $\alpha/\delta = \Theta(n\mu^2)$. For constant $\mu$, this means that the equilibrium state jumps from $\Theta(1)$ to $\Theta(n)$ as $\tau$ crosses the threshold $\tau =2$. For $\tau =2$, we get an intermediate regime of $\chi = \Theta(\log n)$~\cite{newman2005power}.

For another perspective on the equilibrium state we consider the distance $D(P_t) := S(P_t)-\alpha/\delta$. With \eqref{eq:drift-with-alpha-and-delta} this changes as 
\begin{align*}
\label{eq:drift-with-alpha-and-delta}
    \E(D(P_{t+1})) &= \E(S(P_{t+1})) -\alpha/\delta \\
    & = (1-\delta)S(P_t) + \alpha -\alpha/\delta 
    = (1-\delta)D(P_t).
\end{align*}
Hence, the distance from the equilibrium state shows a multiplicative drift. However, 
note crucially that $D(P_t)$ may take \emph{positive and negative} values and the multiplicative drift theorem~\cite{Doerr2012a} is not applicable. 
The process is quite different from the usual situation of multiplicative drift, in which the target state is reached quickly. In fact, the equilibrium state $S(P_t) =\alpha/\delta$ may never be reached, since it might not be achievable due to rounding issues or if the diversity changes in large steps. However, we will show that the diversity will quickly reach an \emph{approximation} of the equilibrium state, or that the equilibrium state will be overshot.

The following theorem gives two upper time bounds. When starting with a diversity of $S(P_t) > (1+\varepsilon)\alpha/\delta$, we bound the expected time to reach a diversity at most $(1+\varepsilon)\alpha/\delta$. This reflects a scenario where a population has an above-average diversity and we ask how long it takes for  diversity to reduce. Similarly, starting with a population of little diversity, $S(P_t) < (1-\varepsilon)\alpha/\delta$, we estimate the expected time for diversity to increase to at least $(1-\varepsilon)\alpha/\delta$. 
As it might be of independent interest, we formulate this theorem for general finite stochastic processes $(X_t)_{t \geq 0}$ in $\mathbb{N}_0$ whose drift is bounded from above or below by $(1-\delta)X_t + \alpha$, respectively. 

\begin{theorem}
\label{thm:borders}
Fix $0 < \varepsilon \leq 1$. Let $(X_t)_{t \geq 0}$ with $X_t \in \{0, \ldots,X_{\max}\}$ be a stochastic process. Let $T_{\varepsilon,\downarrow} := \inf\left\{t \mid X_t \leq (1+\varepsilon)\frac{\alpha}{\delta} \right\}$ and $T_{\varepsilon,\uparrow} := \inf\left\{t \mid X_t \geq (1-\varepsilon)\frac{\alpha}{\delta} \right\}$.
\begin{itemize}[leftmargin=1em,itemindent=1em,topsep=0.5ex]
\item[(i)]
If $\E(X_{t+1} \mid X_t = x) \le (1-\delta) x + \alpha$ for all $x > \frac{\alpha}{\delta} (1+\varepsilon)$ then 
\begin{align*}
\E(T_{\varepsilon,\downarrow}) \le \frac{4}{\varepsilon \delta} \ln\Big(\frac{2\delta X_{\max}}{\varepsilon \alpha}\Big).
\end{align*}
\item[(ii)]
If $\E(X_{t+1} \mid X_t = x) \ge (1-\delta) x + \alpha$ for all $x < (1-\eps)\frac{\alpha}{\delta}  $ and there is a $\Delta_{\max} \in \mathbb{R}$ such that $|X_{t+1} - X_t| \le \Delta_{\max}$ for all~$t$ then
\begin{align*}
\E(T_{\varepsilon,\uparrow}) \le \frac{4\Delta_{\max}}{\varepsilon \alpha} \ln\Big(\frac{ 2 \alpha + 2\delta \Delta_{\max}}{\eps\alpha}\Big).
\end{align*}
\end{itemize}
\end{theorem}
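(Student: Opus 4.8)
The plan is to read both parts as multiplicative drift towards the equilibrium $S_0 := \alpha/\delta$, but to run the drift on a \emph{logarithmic potential of $X_t$ itself} rather than on the centred deviation $X_t - S_0$, so as to circumvent the sign problem flagged before the statement. The starting observation in each case is a one-step contraction obtained by subtracting $\alpha/\delta$ from the hypothesis (the same cancellation already used for $D(P_t)$): as long as $t<T_{\varepsilon,\downarrow}$ we have $x>(1+\varepsilon)S_0$ and hence $\E(X_{t+1}\mid X_t=x)\le(1-\delta)x+\alpha$, while as long as $t<T_{\varepsilon,\uparrow}$ we have $x<(1-\varepsilon)S_0$ and hence $\E(X_{t+1}\mid X_t=x)\ge(1-\delta)x+\alpha$.

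For part~(i) I would take the potential $g(x):=\ln x$. Since $g$ is concave, Jensen's inequality gives $\E(g(X_{t+1})\mid X_t=x)\le\ln\E(X_{t+1}\mid X_t=x)\le\ln((1-\delta)x+\alpha)$ for every $x>(1+\varepsilon)S_0$, so the downward drift of $g$ is at least $\ln\frac{x}{(1-\delta)x+\alpha}$. This expression is increasing in $x$ (its derivative equals $\frac{\alpha}{x((1-\delta)x+\alpha)}>0$), hence minimised at the threshold $x=(1+\varepsilon)S_0$, where it equals $\ln\frac{1+\varepsilon}{1+\varepsilon-\varepsilon\delta}\ge\frac{\varepsilon\delta}{2}$. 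Thus $g$ decreases by a constant additive amount $c\ge\varepsilon\delta/2$ per step until the target, and the additive drift theorem yields
\[
\E(T_{\varepsilon,\downarrow})\le\frac{\ln X_{\max}-\ln((1+\varepsilon)S_0)}{c}\le\frac{2}{\varepsilon\delta}\ln\frac{\delta X_{\max}}{\alpha},
\]
which sits comfortably inside the claimed bound. The reason for using $\ln x$ rather than a potential centred at $S_0$ is that $\ln$ is perfectly smooth and concave on the whole pre-hitting region $(b,\infty)$: a step that overshoots the target, even down to $X_{t+1}=0$ where $\ln\to-\infty$, only \emph{increases} the drift, so no control on the step size is needed.

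For part~(ii) the symmetric choice is the potential $\ln(S_0-X_t)$, for which the contraction reads $\E(S_0-X_{t+1}\mid X_t=x)\le(1-\delta)(S_0-x)$ and produces the analogous constant drift towards $X_t\ge(1-\varepsilon)S_0$. Here lies the main obstacle: the singularity of this potential sits at the equilibrium $S_0$, which lies \emph{above} the target and which the process approaches \emph{from below}. A single upward step can then jump past $(1-\varepsilon)S_0$ and even past $S_0$; although such an overshoot is harmless for the hitting time, it drives the argument of the logarithm through its singularity and destroys the clean Jensen estimate. This is exactly where the bounded-step hypothesis $|X_{t+1}-X_t|\le\Delta_{\max}$ becomes essential: it caps how far a single step can overshoot the target, so the potential drop lost to overshoot can be accounted for. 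Concretely I would split the trajectory into a \emph{bulk} regime, more than $\Delta_{\max}$ below the target, where the multiplicative contraction drives $\ln(S_0-X_t)$ down at rate $\Theta(\delta)$, and a \emph{final} regime within $\Delta_{\max}$ of the target, where I would fall back on the uniform additive drift $\E(X_{t+1}-X_t)\ge\varepsilon\alpha$ (valid throughout $x<(1-\varepsilon)S_0$, since $\alpha-\delta x>\varepsilon\alpha$) and use $\Delta_{\max}$ to bound the overshoot; combining the two regimes should reproduce the factor $\Delta_{\max}/(\varepsilon\alpha)$ and the modified logarithm $\ln\bigl(\tfrac{2\alpha+2\delta\Delta_{\max}}{\varepsilon\alpha}\bigr)$ in the stated bound.

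The single genuine difficulty, then, is the overshoot past the target/equilibrium, which is precisely why the off-the-shelf multiplicative drift theorem cannot be quoted and why part~(ii) must pay with the step bound $\Delta_{\max}$ that part~(i) does not need. Everything else is routine: confirming the two contractions, verifying that the logarithmic potentials are bounded on the ranges the process actually traverses before hitting so that the drift theorems apply, and being slightly generous with the constants (the factors $4$, the inner factor $2$, and the $1/\varepsilon$) to absorb the boundary estimate at the threshold and the overshoot correction in part~(ii).
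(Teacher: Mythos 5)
Your one-step computations are fine (the threshold drift of $\ln X_t$ is indeed at least $\varepsilon\delta/2$, and it is monotone increasing in $x$), but part (i) breaks at exactly the step you dismiss. The additive drift theorem you invoke requires the potential to be bounded below (in its standard form: non-negative, with the target at zero), and $\ln X_t$ is not: the state $0$ lies in the state space, and a single step may overshoot $b:=(1+\varepsilon)\alpha/\delta$ by an arbitrary amount. Overshoot is \emph{not} harmless: expected decrease spent below the target is lost for the hitting-time argument, and drift of a potential that is unbounded below implies nothing about hitting times. Concretely, after applying Jensen the only information you retain is ``$\E(\ln X_{t+1}-\ln X_t \mid X_t=x)\le -c$ for all $x>b$'', and this does \emph{not} imply $\E(T_{\varepsilon,\downarrow})\le(\ln X_0-\ln b)/c$: a process that from every state $x>b$ jumps to $0$ with probability $p$ and otherwise stays put satisfies this log-drift condition for every $c$ (the drift is $-\infty$), yet $\E(T_{\varepsilon,\downarrow})=1/p$ is unbounded. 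So the sentence ``the additive drift theorem yields'' is invalid as written; some accounting for the overshoot must enter. Two standard repairs exist: (a) use the capped potential $g(x)=0$ for $x\le b$ and $g(x)=1+\ln(x/b)$ for $x>b$, and re-derive its drift from the distribution of $X_{t+1}$ (conditional Jensen on the event $\{X_{t+1}>b\}$ together with the elementary inequality $(1-p)+p\ln p\ge 0$; the unit jump at $b$ is what pays for the overshoot), which does recover the stated bound; or (b) the paper's route, which avoids potentials entirely: iterate the hypothesis to $\E(X_t\mid X_0)\le\alpha/\delta+e^{-\delta t}X_{\max}$, apply Markov's inequality after $t=\tfrac1\delta\ln(\tfrac{2\delta X_{\max}}{\varepsilon\alpha})$ steps to get success probability at least $\tfrac{\varepsilon/2}{1+\varepsilon}$, and restart in independent phases.

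For part (ii), your diagnosis that the step bound $\Delta_{\max}$ is what makes an upper bound possible is correct, but the two-regime plan cannot deliver the stated inequality. Your bulk phase contracts $S_0-X_t$ at rate $\delta$ and therefore costs order $\ln(1/\varepsilon)/\delta$ steps, whereas the bound to be proven is $\tfrac{4\Delta_{\max}}{\varepsilon\alpha}\ln(\tfrac{2\alpha+2\delta\Delta_{\max}}{\varepsilon\alpha})$; whenever $\delta\Delta_{\max}\ll\varepsilon\alpha$ the latter is smaller by the unbounded factor $\varepsilon\alpha/(\delta\Delta_{\max})$, so no combination of your two phases reproduces it. The paper proceeds differently: it recentres at $Y_{\max}:=\min\{\alpha/\delta+\Delta_{\max},X_{\max}\}$, uses the step bound to guarantee $Y_t:=Y_{\max}-X_t\ge 0$ up to time $T_{\varepsilon,\uparrow}$, checks that $Y_t$ satisfies the hypothesis of part (i) with $\varepsilon'=\tfrac{\varepsilon\alpha}{\delta Y_{\max}-\alpha}$, $\delta'=\delta$, $\alpha'=\delta Y_{\max}-\alpha$, and reads off the bound, with no regime splitting. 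One caveat deserves record, however: in precisely the regime where your approach falls short, namely $\varepsilon\alpha>\delta\Delta_{\max}$, one has $\varepsilon'>1$, so this reduction applies part (i) outside its stated hypothesis $\varepsilon\le 1$; and indeed the deterministic process $X_{t+1}=X_t+1$ with $\alpha=\Delta_{\max}=1$ and $\delta$ small satisfies all assumptions of (ii) while $T_{\varepsilon,\uparrow}=(1-\varepsilon)\alpha/\delta$ exceeds the stated bound by an arbitrary factor. Hence a term of order $\ln(1/\varepsilon)/\delta$, which your argument produces naturally, cannot be avoided in general; the stated bound is only correct under the additional assumption $\varepsilon\alpha\le\delta\Delta_{\max}$, and under that assumption your bulk term satisfies $\ln(1/\varepsilon)/\delta\le\tfrac{\Delta_{\max}}{\varepsilon\alpha}\ln(1/\varepsilon)$, so your plan could then be completed (with the overshoot accounting of part (i) done properly).
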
 


\begin{proof}
%
We use a direct argument, 
similar to the proof of the tail bound for multiplicative drift~\cite{DoerrG13}.

(i): 
We may safely assume $\E(X_{t+1} \mid X_t = x) \le (1-\delta) x + \alpha$ for \emph{all} $x$
as for $x \le \alpha/\delta \cdot (1+\varepsilon)$ we are done. We show by induction that
\begin{equation}
\label{eq:expanding-Xt}
    \E(X_t \mid X_0) \leq \sum_{i=0}^{t-1} (1-\delta)^i \alpha + (1-\delta)^t X_0.
\end{equation}
For the base case $t=0$ we have $\E(X_0 \mid X_0) = X_0$ and $\sum_{i=0}^{t-1} (1-\delta)^i \alpha + X_0 = X_0$ as the sum is empty. Now assume the claim holds for $\E(X_t \mid X_0)$. Using the law of total expectation $\E(\E(X \mid Y)) = \E(X)$
\begin{align*}
    \E(X_{t+1} \mid X_0) =\;& \E(\E(X_{t+1} \mid X_t) \mid X_0)\\
    \leq \;& \E((1-\delta)X_t + \alpha \mid X_0)\\
    =\;& (1-\delta) \E(X_t \mid X_0) + \alpha.
\end{align*}
Applying the induction hypothesis, we get
\begin{align*}
\E(X_{t+1} \mid X_0) 
    \leq \;& (1-\delta)\left(\sum_{i=0}^{t-1} (1-\delta)^i \alpha + (1-\delta)^t X_0\right) + \alpha\\
    =\;& \sum_{i=0}^{t-1} (1-\delta)^{i+1} \alpha + (1-\delta)^{t+1} X_0 + \alpha\\
    =\;& \sum_{i=0}^{t} (1-\delta)^{i} \alpha + (1-\delta)^{t+1} X_0.
\end{align*}
From~\eqref{eq:expanding-Xt}, we get, bounding the sum by an infinite series $\sum_{i=0}^\infty (1-\delta)^i = \frac{1}{\delta}$ and using $1-\delta \le e^{-\delta}$ as well as $X_0 \le X_{\max}$,
\[
    \E(X_t \mid X_0) \leq \sum_{i=0}^{t-1} (1-\delta)^i \alpha + (1-\delta)^t X_0 \le \frac{\alpha}{\delta} + e^{-\delta t} \cdot X_{\max}.
\]
Choosing $t := \ln(X_{\max} \cdot \delta/\alpha \cdot 2/\varepsilon)/\delta$, we obtain
\[
    \E(X_t \mid X_0) \le \frac{\alpha}{\delta} + \frac{1}{X_{\max}} \cdot \frac{\alpha}{\delta} \cdot \frac{\varepsilon}{2} \cdot X_{\max} =
    \frac{\alpha}{\delta} \cdot \left(1 + \frac{\varepsilon}{2}\right).
\]
By Markov's inequality we get, for all values of~$X_0$,
\[
    \Prob\left(X_t \ge \frac{\alpha}{\delta} \cdot (1+\varepsilon)\right) \le \frac{\frac{\alpha}{\delta} \cdot \left(1 + \frac{\varepsilon}{2}\right)}{\frac{\alpha}{\delta} \cdot \left(1 + \varepsilon\right)} = \frac{1 + \varepsilon/2}{1+\varepsilon}
\]
and thus
$
    \Prob\left(X_t < \frac{\alpha}{\delta} \cdot (1+\varepsilon)\right) \ge 1 - \frac{1 + \varepsilon/2}{1+\varepsilon} = \frac{\varepsilon/2}{1+\varepsilon}$.

In case $X_t > \frac{\alpha}{\delta} \cdot (1+\varepsilon)$ we repeat the above arguments with a further phase of $t$ steps. (Here we exploit that the above bound was made independent of~$X_0$.) The expected number of phases required is at most $\frac{1+\varepsilon}{\varepsilon/2} \le 4/\eps$ as $\eps \le 1$. This gives an upper bound of $4t/\varepsilon$.

\vspace{0.2cm}
(ii): \added{Since we are only interested in $T_{\varepsilon,\uparrow}$, we may assume that the process becomes stationary afterwards, i.e.\! $X_{T_{\varepsilon,\uparrow}} = X_{T_{\varepsilon,\uparrow}+1} = X_{T_{\varepsilon,\uparrow}+2} = \ldots$. Moreover, we may assume $X_0 <(1-\eps)\alpha/\delta$, since otherwise there is nothing to show.} Define $Y_{\max} \coloneqq \min\{\alpha/\delta + \Delta_{\max}, X_{\max}\}$ and $Y_t:=Y_{\max}-X_t$. 
We first show that $0 \le Y_t \le X_{\max}$ for all $t \ge 0$. If $Y_{\max} = X_{\max}$ this is obvious. 
\added{For the case $Y_{\max} = \alpha/\delta + \Delta_{\max}$ it suffices to show $0 \le Y_t \le X_{\max}$ for all $t \le T_{\varepsilon,\uparrow}$ since the process is stationary afterwards. For $t=0$ the bound holds by assumption.} 
For $0 < t \le T_{\varepsilon, \uparrow}$ we have $X_{t-1} \le \alpha/\delta$ and $|X_t - X_{t-1}| \le \Delta_{\max}$ by assumption, hence $X_t \le \alpha/\delta + \Delta_{\max} = Y_{\max}$ and $Y_t = Y_{\max} - X_t \ge 0$. 

Let $\eps' := \frac{\eps \alpha}{\delta Y_{\max}-\alpha},  \delta' := \delta, \alpha' := \delta Y_{\max}-\alpha$.
Then the event ``$X_t \ge (1-\eps)\tfrac{\alpha}{\delta}$'' is equivalent to the event ``$Y_t \le (1+\eps')\tfrac{\alpha'}{\delta'}$'', because
\begin{align*}
    (1+\eps')\tfrac{\alpha'}{\delta'} & = (1 + \eps') (Y_{\max}-\tfrac{\alpha}{\delta}) 
    = Y_{\max} - \tfrac{\alpha}{\delta} + \eps' \tfrac{\delta Y_{\max}-\alpha}{\delta} \\
    & = Y_{\max} - \tfrac{\alpha}{\delta} + \eps \tfrac{\alpha}{\delta}
    = Y_{\max} - (1-\eps)\tfrac{\alpha}{\delta},
\end{align*}
and because $Y_t = Y_{\max}- X_t$. We can describe $T_{\varepsilon,\uparrow}$ as the first point in time when $Y_t \le (1+\eps')\tfrac{\alpha'}{\delta'}$, since this is equivalent to $X_t \ge (1-\eps)\tfrac{\alpha}{\delta}$.

Moreover, the same calculation shows that for all $y> (1+\eps')\tfrac{\alpha'}{\delta'}$ the event $``Y_t = y$'' implies $X_t < (1-\eps)\tfrac{\alpha}{\delta}$, so that the drift bound for $X_t$ is applicable. Hence, for any such $y$, the drift of $Y_t$ is 
\begin{align*}
\E(Y_{t+1} \mid Y_t = y) \;&= 
\E \left(Y_{\max} - X_{t+1} \mid X_t = Y_{\max} - y \right) \\
\;&= Y_{\max} - \E \left(X_{t+1} \mid X_t = Y_{\max} - y \right)\\
\;& \leq Y_{\max} - (1-\delta)\left(Y_{\max} - y\right) - \alpha 
= (1-\delta')y+\alpha'. 
\end{align*}
Therefore, the prerequisites of part (i) are satisfied by $Y_t$ with parameters $\eps',\delta'$ and $\alpha'$. Hence, part (i) applied to $Y_t$ gives
\begin{align*}
\E(T_{\varepsilon,\uparrow}) &  \leq \frac{4}{\eps'\delta'} \cdot \ln\left(\frac{2\delta' Y_{\max}}{\eps' \alpha'}\right) 
\le \frac{4\Delta_{\max}}{\eps\alpha} \ln\left(\frac{ 2 \alpha + 2\delta \Delta_{\max}}{\eps\alpha}\right).
\qedhere
\end{align*}
\end{proof}

\added{To apply Theorem~\ref{thm:borders} to our situation, we first prove a bound on $\Delta_{\max}$.}

\begin{lemma}
\label{lem:hamming}
    Consider a population $P_t=\{x_1, \ldots,x_\mu\}$. Consider any process that creates $y$ by any random procedure and sets $P_{t+1}=P_t \cup \{y\} \setminus \{x_d\}$ for some $d \in [\mu]$. Then $\vert{S(P_{t+1}) - S(P_t)}\vert \leq 2(\mu-1)n$.
\end{lemma}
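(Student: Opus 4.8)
The plan is to compute the change in diversity \emph{exactly} for a fixed removal index $d$, and then bound each contribution crudely. The key observation is that the computation in the proof of Lemma~\ref{lem:how-E-S-P-t-plus-1-is-derived} already gives the needed identity: equation~\eqref{eq:derivation-of-E-S-P-t}, specialised to the offspring $y$ replacing $x_d$ (its derivation only used that the inserted point is an arbitrary search point and that $x_d$ is the removed point), reads
\[
    S(P_{t+1}) = S(P_t) + 2\sum_{i=1, i \neq d}^{\mu} H(x_i, y) - 2\sum_{i=1}^{\mu} H(x_i, x_d).
\]
Hence
\[
    S(P_{t+1}) - S(P_t) = 2\sum_{i=1, i \neq d}^{\mu} H(x_i, y) - 2\sum_{i=1}^{\mu} H(x_i, x_d).
\]

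Next I would bound the two sums separately. Both are sums of Hamming distances, and every Hamming distance on $\{0,1\}^n$ lies in $[0,n]$. The first sum ranges over the $\mu-1$ indices $i \neq d$, so $0 \le 2\sum_{i \neq d} H(x_i, y) \le 2(\mu-1)n$. The second sum formally ranges over all $\mu$ indices, but the term $H(x_d, x_d) = 0$ contributes nothing, leaving at most $\mu-1$ nonzero terms, so likewise $0 \le 2\sum_{i=1}^{\mu} H(x_i, x_d) \le 2(\mu-1)n$.

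Finally, the change $S(P_{t+1}) - S(P_t)$ is the difference of two quantities each confined to the interval $[0, 2(\mu-1)n]$, and such a difference lies in $[-2(\mu-1)n, 2(\mu-1)n]$, yielding $\vert S(P_{t+1}) - S(P_t) \vert \le 2(\mu-1)n$ as claimed. I expect no genuine obstacle: the whole argument is a one-line reuse of the exact drift identity followed by the trivial bound $H \le n$. The only point requiring a moment's care is noticing that the ``removed'' sum effectively has only $\mu-1$ nonzero terms, so that it obeys the same bound $2(\mu-1)n$ as the ``added'' sum rather than a weaker $2\mu n$; this is exactly what makes the two one-sided bounds match and produces the symmetric conclusion. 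Note also that the claim holds for each fixed $d$, so it covers the ``some $d \in [\mu]$'' in the statement without any averaging.
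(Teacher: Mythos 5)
Your proof is correct and matches the paper's own argument: both reuse the exact identity~\eqref{eq:derivation-of-E-S-P-t} and then bound the added and removed sums by $2(\mu-1)n$ via $H(\cdot,\cdot)\le n$, noting that the removed sum has only $\mu-1$ nonzero terms because $H(x_d,x_d)=0$. No differences worth noting.
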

\begin{proof}
\added{By Equation~\eqref{eq:derivation-of-E-S-P-t} we have 
\[
S(P_{t+1})-S(P_t) = 2\sum_{i=1, i \neq d}^{\mu} H(x_i, y') - 2\sum_{i=1,i\neq d}^{\mu} H(x_i, x_d),
\]
and the bound follows since both the positive and the negative term are at most $2(\mu-1)n$.}
%
\end{proof}

\begin{theorem}
\label{thm:borders-specific}
Consider a steady-state evolutionary algorithms meeting the conditions of Corollary~\ref{cor:equilibrium-neutral} with $\alpha \coloneqq 2(\mu-1)\chi$ and $\delta \coloneqq \frac{2}{\mu^2} + \frac{4(\mu-1)\chi}{\mu^2 n}$ as in~\eqref{eq:mu-and-alpha}. Fix $0 < \varepsilon \leq 1$ and let $X_t \coloneqq S(P_t)$. Let $T_{\varepsilon,\downarrow} := \inf\left\{t \mid X_t \leq (1+\varepsilon)\frac{\alpha}{\delta} \right\}$ and $T_{\varepsilon,\uparrow} := \inf\left\{t \mid X_t \geq (1-\varepsilon)\frac{\alpha}{\delta} \right\}$.
Then
\begin{align}\label{thm:eq:specific1}
\E(T_{\varepsilon,\downarrow}) 
=\;&  O\left(\frac{\mu \cdot \min\{\mu,n/\chi\}}{\eps} \cdot \ln\left(\frac{1+n/(\mu\chi)}{\eps}\right)\right),\\
\E(T_{\varepsilon,\uparrow}) =\;& O \left(\frac{n}{\eps \chi} \cdot \ln\left(\frac{1+n/(\mu^2\chi)}{\eps}\right)\right).\label{thm:eq:specific2}
\end{align}
\end{theorem}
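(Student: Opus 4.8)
The plan is to instantiate the general drift result of Theorem~\ref{thm:borders} with the concrete parameters $\alpha$ and $\delta$ from~\eqref{eq:mu-and-alpha}. By~\eqref{eq:drift-with-alpha-and-delta} (equivalently, by Corollary~\ref{cor:equilibrium-neutral}) the process $X_t = S(P_t)$ satisfies $\E(X_{t+1} \mid X_t = x) = (1-\delta)x + \alpha$ for \emph{all}~$x$, so both the upper drift hypothesis of part~(i) and the lower drift hypothesis of part~(ii) hold with equality. It therefore remains only to supply admissible values for $X_{\max}$ and $\Delta_{\max}$ and to simplify the resulting closed-form bounds. For $X_{\max}$ I would use the representation $S(P_t) = \sum_{i=1}^n c_i(\mu - c_i)$ from the preliminaries, which gives $S(P_t) \le n\mu^2/4$, so $X_{\max} = n\mu^2/4$ works. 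For $\Delta_{\max}$, Lemma~\ref{lem:hamming} yields $|X_{t+1} - X_t| \le 2(\mu-1)n =: \Delta_{\max}$.

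For the bound on $\E(T_{\varepsilon,\downarrow})$ in~\eqref{thm:eq:specific1} I would substitute these into part~(i). The decisive simplification is
\[
\frac{1}{\delta} = \frac{\mu^2 n}{2\bigl(n + 2(\mu-1)\chi\bigr)},
\]
and since $n + 2(\mu-1)\chi = \Theta(\max\{n, \mu\chi\})$ for $\mu \ge 2$, a short case distinction (whether $n$ or $\mu\chi$ dominates the denominator) gives $1/\delta = \Theta\bigl(\mu \cdot \min\{\mu, n/\chi\}\bigr)$, exactly the leading factor in~\eqref{thm:eq:specific1}. For the logarithm I would compute $\delta X_{\max} = (n + 2(\mu-1)\chi)/2$, so that the argument $2\delta X_{\max}/(\varepsilon\alpha)$ simplifies to $\Theta\bigl((1 + n/(\mu\chi))/\varepsilon\bigr)$, matching the stated log term.

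For $\E(T_{\varepsilon,\uparrow})$ in~\eqref{thm:eq:specific2} I would feed the same quantities into part~(ii). Here the leading factor collapses cleanly because the $(\mu-1)$ factors cancel:
\[
\frac{4\Delta_{\max}}{\varepsilon\alpha} = \frac{4 \cdot 2(\mu-1)n}{\varepsilon \cdot 2(\mu-1)\chi} = \frac{4n}{\varepsilon\chi},
\]
reproducing the prefactor in~\eqref{thm:eq:specific2}. For the log argument I would compute $2\delta\Delta_{\max}/\alpha = \Theta\bigl((n + \mu\chi)/(\mu^2\chi)\bigr)$, whence $(2\alpha + 2\delta\Delta_{\max})/(\varepsilon\alpha) = \Theta\bigl((1 + n/(\mu^2\chi))/\varepsilon\bigr)$, again matching the claim.

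There is no probabilistic subtlety here: all the analytic work is already packaged in Theorem~\ref{thm:borders} and Lemma~\ref{lem:hamming}, and what remains is purely algebraic bookkeeping. The only places requiring a little care are the case distinction that turns $1/\delta$ into $\mu\min\{\mu, n/\chi\}$, and tracking the additive constants and $+1$ terms inside the logarithms to confirm they are absorbed into the stated $O$-expressions; neither constitutes a genuine obstacle.
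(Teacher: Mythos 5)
Your proposal is correct and takes essentially the same approach as the paper's proof: instantiate Theorem~\ref{thm:borders} with $\Delta_{\max}=2(\mu-1)n$ from Lemma~\ref{lem:hamming} and $X_{\max}=O(\mu^2 n)$, then use $1/\delta=\Theta\left(\mu\cdot\min\{\mu,n/\chi\}\right)$ and $\Delta_{\max}/\alpha=n/\chi$ to simplify the resulting bounds. The only (immaterial) difference is that you bound $X_{\max}$ by $n\mu^2/4$ via the $\sum_i c_i(\mu-c_i)$ representation, whereas the paper uses the cruder bound $\mu^2 n$.
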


\begin{proof}
In order to apply Theorem~\ref{thm:borders} to our case of $(X_t)_{t \geq 0} = (S(P_t))_{t \geq 0}$, we may set $\Delta_{\max} := 2(\mu-1)n$ by Lemma~\ref{lem:hamming}. Moreover, we have $S(P_t)_{\max} \leq \mu^2 n$, since two individuals have Hamming distance at most $n$ and so the diversity is at most $2 \binom{\mu}{2} n$. \added{For~\eqref{thm:eq:specific1}, we have $1/\delta = \frac{\mu^2 n}{2n + 4(\mu-1) \chi}$, which implies $1/\delta \in \Theta(\mu \cdot \min\{\mu,n/\chi\})$. Now~\eqref{thm:eq:specific1} follows immediately by plugging this into the bounds from Theorem~\ref{thm:borders}. For~\eqref{thm:eq:specific2}, note that $\Delta_{\max}/\alpha = n/\chi$. Thus, Theorem~\ref{thm:borders} implies
\begin{align*}
\E(T_{\varepsilon,\uparrow}) \le \frac{4n}{\eps\chi} \ln\left(\frac{2}{\eps} + \frac{2n}{\eps\chi}\cdot \left(\frac{2}{\mu^2}+\frac{4(\mu-1)\chi}{\mu^2n}\right) \right)
= O \left(\frac{n}{\eps \chi} \cdot \ln\left(\frac{1+n/(\mu^2\chi)}{\eps}\right)\right),
\end{align*}
where we could omit the last term in the logarithm since $(\mu-1)/\mu^2 = O(1)$.
}
\end{proof}

If $\eps$ is constant, $\chi = \Theta(1)$ and $\mu \le n$, then the bounds further simplify to
\[
\E(T_{\varepsilon,\downarrow}) \in O(\mu^2\ln^+(n/\mu)) \quad \text{ and } \quad\E(T_{\varepsilon,\uparrow}) \in O(n \ln^+(n/\mu^2)).
\]
Note that the bound for $\E(T_{\varepsilon,\downarrow})$ depends very mildly (or not at all) on $n$, so the speed of reducing diversity is almost unaffected by the problem dimension. 
The bound on $\E(T_{\varepsilon,\uparrow})$ applies to a monomorphic population where there is initially no diversity. There are settings in which $\E(T_{\varepsilon,\uparrow}) = \Omega(n)$ for $\mu=2,\chi=1$ (start with two clones and, with probability $1/n$, flip all $n$ bits), thus there are processes in which it is harder to create diversity than to reduce it.

We remark that there are also overshoot-aware multiplicative drift theorems~\citep{BuzdalovDDV22} which could also be directly applied in the situation of Theorem~\ref{thm:borders}, but those leads to poor results since the upper bounds include the expected overshoot, which may be very large.

Note that Theorem \ref{thm:borders} only estimates the expected time to pass the borders $(1+\varepsilon)\frac{\alpha}{\delta}$ and 
$(1-\varepsilon)\frac{\alpha}{\delta}$, respectively. 
It does not guarantee that the diversity hits the interval $[1-\varepsilon,1+\varepsilon]\frac{\alpha}{\delta}$. 

\begin{definition}
    Given a positive constant $\varepsilon>0$ and an initial population $P_t$ we define the first time $T_\varepsilon$ when the diversity $S(P_t)$ is in the equilibrium as
    \[
    T_\varepsilon := \inf \left\{t \mid S(P_t) \in [(1-\varepsilon)\tfrac{\alpha}{\delta},(1+\varepsilon)\tfrac{\alpha}{\delta}]\right\}.
    \]
\end{definition}

 \added{In general, $T_\eps$ does not need to be finite.} We will give a sufficient condition for not skipping over the interval of states close to the equilibrium. The key is that the diversity can change at most by $2(\mu-1)n$ in the setting of Corollary~\ref{cor:equilibrium-neutral} 
 and Theorem~\ref{the:equilibrium-mutation-only}.

\begin{corollary}
\label{cor:equilibrium}
    If $\varepsilon \mu^2 \chi \geq n+2(\mu-1)\chi$ (for example for $\mu \in \Theta(n), \varepsilon \in \Theta(1)$ and $\chi=1$) then $T_\varepsilon=T_{\varepsilon,\downarrow}$ if $S(P_0)>(1+\varepsilon)\frac{\alpha}{\delta}$ and $T_\varepsilon=T_{\varepsilon,\uparrow}$ if $S(P_0)<(1-\varepsilon)\frac{\alpha}{\delta}$, respectively.
\end{corollary}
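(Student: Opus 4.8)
The plan is to show that under the stated hypothesis, the process cannot jump \emph{over} the interval $[(1-\varepsilon)\tfrac{\alpha}{\delta}, (1+\varepsilon)\tfrac{\alpha}{\delta}]$ in a single step. First I would observe that the width of this interval is $2\varepsilon\tfrac{\alpha}{\delta}$, while by Lemma~\ref{lem:hamming} the diversity $S(P_t)$ changes by at most $2(\mu-1)n$ in any one generation. The entire argument rests on the observation that the hypothesis $\varepsilon\mu^2\chi \geq n + 2(\mu-1)\chi$ is precisely equivalent to the inequality $2(\mu-1)n \leq 2\varepsilon\tfrac{\alpha}{\delta}$; that is, the maximum step size is bounded by the width of the target interval.

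To establish this equivalence I would substitute the closed form $\tfrac{\alpha}{\delta} = \tfrac{(\mu-1)\mu^2\chi n}{2(\mu-1)\chi + n}$ into the inequality $2\varepsilon\tfrac{\alpha}{\delta} \geq 2(\mu-1)n$. Cancelling the common factor $2(\mu-1)n$ from both sides reduces this to $\varepsilon\cdot\tfrac{\mu^2\chi}{2(\mu-1)\chi + n} \geq 1$, and clearing the positive denominator yields exactly $\varepsilon\mu^2\chi \geq n + 2(\mu-1)\chi$.

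For the first case, suppose $S(P_0) > (1+\varepsilon)\tfrac{\alpha}{\delta}$, so that $T_{\varepsilon,\downarrow} \geq 1$. At time $T_{\varepsilon,\downarrow}$ the diversity first falls to at most $(1+\varepsilon)\tfrac{\alpha}{\delta}$, whereas the preceding state still satisfies $S(P_{T_{\varepsilon,\downarrow}-1}) > (1+\varepsilon)\tfrac{\alpha}{\delta}$. Since the step size is at most $2(\mu-1)n \leq 2\varepsilon\tfrac{\alpha}{\delta}$, we obtain $S(P_{T_{\varepsilon,\downarrow}}) > (1+\varepsilon)\tfrac{\alpha}{\delta} - 2\varepsilon\tfrac{\alpha}{\delta} = (1-\varepsilon)\tfrac{\alpha}{\delta}$. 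Hence $S(P_{T_{\varepsilon,\downarrow}})$ lands inside the target interval, so $T_\varepsilon = T_{\varepsilon,\downarrow}$. The symmetric case $S(P_0) < (1-\varepsilon)\tfrac{\alpha}{\delta}$ is handled identically: at time $T_{\varepsilon,\uparrow}$ the diversity first reaches at least $(1-\varepsilon)\tfrac{\alpha}{\delta}$, while the bounded step keeps it from exceeding $(1-\varepsilon)\tfrac{\alpha}{\delta} + 2\varepsilon\tfrac{\alpha}{\delta} = (1+\varepsilon)\tfrac{\alpha}{\delta}$, giving $T_\varepsilon = T_{\varepsilon,\uparrow}$.

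I expect no real obstacle here; the content is the single structural fact that a step size at most the interval width forbids skipping the interval, and the only computation is the routine algebraic verification that the hypothesis is exactly this width condition. The one point requiring a little care is the boundary bookkeeping (verifying $T_{\varepsilon,\downarrow} \geq 1$ from the starting assumption so that a well-defined predecessor state exists), but this is immediate from the assumption that $P_0$ lies strictly outside the interval.
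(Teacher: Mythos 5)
Your proposal is correct and follows essentially the same route as the paper's own proof: both use the step-size bound $2(\mu-1)n$ from Lemma~\ref{lem:hamming}, show via the hypothesis that this is at most the interval width $2\varepsilon\tfrac{\alpha}{\delta}$, and conclude that the state just before $T_{\varepsilon,\downarrow}$ (resp.\ $T_{\varepsilon,\uparrow}$) cannot be carried past the opposite end of the interval. Your cancellation of the factor $2(\mu-1)n$ is just a tidier packaging of the same algebraic verification the paper carries out explicitly.
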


\begin{proof}
Suppose that $S(P_0)>(1+\varepsilon)\frac{\alpha}{\delta}$. Let $t:=T_{\varepsilon,\downarrow}-1$. Then we obtain $S(P_{t+1}) \leq (1+\varepsilon)\frac{\alpha}{\delta}$. Moreover,
\begin{align*}
S(P_t)-(1-\varepsilon)\frac{\alpha}{\delta} \;&> (1+\varepsilon)\frac{\alpha}{\delta}-(1-\varepsilon)\frac{\alpha}{\delta} \\
\;&=  \frac{4\varepsilon \mu^2 \chi \cdot (\mu-1)n}{2n + 4(\mu-1) \chi}\\
\;&\geq \frac{(4n+8(\mu-1)\chi) \cdot (\mu-1)n}{2n+4(\mu-1)\chi}\\
\;& = 2(\mu-1)n.
\end{align*}
Since $S(P_t)-S(P_{t+1}) \leq 2(\mu-1) n$ by Lemma~\ref{lem:hamming}, we obtain $S(P_{t+1}) \in [1-\varepsilon,1+\varepsilon]\frac{\alpha}{\delta}$. 

Suppose $S(P_0)<(1+\varepsilon)\frac{\alpha}{\delta}$. Let $t:=T_{\varepsilon,\uparrow}-1$. Then we obtain $S(P_{t+1}) \geq (1-\varepsilon)\frac{\alpha}{\delta}$ and 
\[
(1+\varepsilon)\frac{\alpha}{\delta} - S(P_t) > (1+\varepsilon)\frac{\alpha}{\delta} - (1-\varepsilon)\frac{\alpha}{\delta} \geq 2(\mu-1)n. \qedhere
\]
\end{proof}

\added{In general, without restriction such as in Corollary~\ref{cor:equilibrium},  it is possible that the process never comes close to the equilibrium.} 
The simplest (artificial) example is the following. Suppose  $\mu=2$ (so $\mu \in o(\sqrt{n})$), $\varepsilon=\frac{1}{3}$, and $\chi=n$ (i.e. every bit is flipped with probability $1$), we omit crossover and the population initialises with two clones. Then we have $S(P_t) \in \{0,2n\}$ for every $t$ and
\[
[1-\varepsilon,1+\varepsilon] \tfrac{\alpha}{\delta} = \tfrac{4}{3}n[\tfrac{2}{3},\tfrac{4}{3}] = [\tfrac{8}{9}n,\tfrac{16}{9}n].
\]
Therefore $T_\varepsilon=\infty$, but $T_{\varepsilon,\uparrow} \leq 1$ and $T_{\varepsilon,\downarrow} \leq 1$.

\section{Steady-State GA with Crossover}
\label{sec:equilibria-crossover}

Now we turn to steady-state GAs that perform crossover before applying mutation to the resulting offspring (see Algorithm~\ref{alg:steady-state-GA}). Quite surprisingly, for nearly all common crossover operators, including crossover does not change the diversity equilibrium.

A sufficient condition is the following, which we term \emph{diversity-neutral}, as the diversity equilibrium does not change when applying such a crossover operator.
\begin{definition}
\label{def:diversity-neutral}
We call a crossover operator $\cross$ \emph{diversity-neutral} if it has the following property. For all $x_1,x_2,z \in \{0, 1\}^n$,
\begin{equation}
    \label{eq:crossover-property}
\E(H(\cross(x_1, x_2), z) + H(\cross(x_2, x_1), z)) = H(x_1, z) + H(x_2, z).
\end{equation}
\end{definition}

We shall see in Section~\ref{sec:crossover-operators} that common crossover operators like uniform crossover and $k$-point crossover are diversity-neutral. 

We will show that the \muga with any diversity-neutral crossover operator meets the conditions of Corollary~\ref{cor:equilibrium-neutral}. Hence, we obtain the following theorem.

\begin{theorem}
\label{the:equilibrium-crossover}
Consider the \muga with any \wellbehaved crossover operator~$c$, any unbiased mutation operator flipping $\chi$ bits in expectation and a population size of~$\mu$ on a flat fitness function. Then for all populations $P_t$,
\begin{align}\label{eq:the:equilibrium-crossover}
    \E(S(P_{t+1})) =\; \added{(1-\delta)S(P_t)+\alpha =\;}& \left(1 - \frac{2}{\mu^2} - \frac{4(\mu-1)\chi}{\mu^2 n}\right)S(P_t) + 2\chi (\mu-1),
\end{align}
\added{where $\delta,\alpha$ are as in~\eqref{eq:mu-and-alpha}.}
\end{theorem}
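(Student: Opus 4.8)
The plan is to reduce the statement entirely to Corollary~\ref{cor:equilibrium-neutral}, whose conclusion is \emph{verbatim} the right-hand side of~\eqref{eq:the:equilibrium-crossover}. Conditions~2 and~3 of that corollary are immediate from Algorithm~\ref{alg:steady-state-GA}: mutation is unbiased and flips $\chi$ bits in expectation by hypothesis, and since the fitness function is flat the removed individual is uniformly random over $[\mu]$. So the entire content of the proof is to verify condition~1, namely that the pre-mutation offspring $y = \cross(x_1,x_2)$ satisfies $\E(S(y)) = S(P_t)/\mu$, where $x_1, x_2$ are drawn uniformly and independently (with replacement) from $P_t$.

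First I would expand, using linearity of expectation, $\E(S(y)) = \sum_{i=1}^\mu \E\bigl(H(\cross(x_1,x_2), x_i)\bigr)$, and then compute $\E\bigl(H(\cross(x_1,x_2), z)\bigr)$ for a fixed target $z$ by averaging over the two parent choices:
\[
    \E\bigl(H(\cross(x_1,x_2), z)\bigr) = \frac{1}{\mu^2}\sum_{j=1}^\mu\sum_{k=1}^\mu \E\bigl(H(\cross(x_j,x_k), z)\bigr).
\]
The key step is to symmetrize this double sum. Because relabelling $j \leftrightarrow k$ leaves the sum unchanged, I may replace the summand by the symmetric average $\tfrac12\bigl(H(\cross(x_j,x_k),z) + H(\cross(x_k,x_j),z)\bigr)$, and then the \wellbehaved property from Definition~\ref{def:diversity-neutral} gives that its expectation equals $\tfrac12\bigl(H(x_j,z) + H(x_k,z)\bigr)$. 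Summing this over all $j,k$ yields $\mu \sum_{j=1}^\mu H(x_j,z) = \mu\, S_{P_t}(z)$, so that $\E\bigl(H(\cross(x_1,x_2), z)\bigr) = S_{P_t}(z)/\mu$. Substituting $z = x_i$ and summing over $i$ then gives $\E(S(y)) = \tfrac1\mu \sum_{i=1}^\mu S_{P_t}(x_i) = S(P_t)/\mu$, which is exactly condition~1.

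The main obstacle is purely bookkeeping rather than conceptual: the diversity-neutral identity is asymmetric in that it only constrains the \emph{sum} $H(\cross(x_1,x_2),z) + H(\cross(x_2,x_1),z)$, not the individual term, so the argument hinges on recognizing that selecting both parents uniformly at random makes the governing double sum symmetric under $j \leftrightarrow k$. This symmetry is precisely what lets one trade the asymmetric crossover for its symmetrization at no cost and invoke~\eqref{eq:crossover-property}. Once condition~1 is in place, Corollary~\ref{cor:equilibrium-neutral} delivers $\E(S(P_{t+1})) = (1-\delta)S(P_t) + \alpha$ with $\alpha = 2(\mu-1)\chi$ and $\delta = \tfrac{2}{\mu^2} + \tfrac{4(\mu-1)\chi}{\mu^2 n}$, establishing the theorem with no further calculation.
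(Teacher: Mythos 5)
Your proposal is correct and follows essentially the same route as the paper's proof: reduce to Corollary~\ref{cor:equilibrium-neutral} by verifying $\E(S(y)) = S(P_t)/\mu$, using the fact that uniform parent selection makes the pair $(x_j,x_k)$ exchangeable, which lets the asymmetric identity~\eqref{eq:crossover-property} be applied to the symmetrized sum. The only difference is bookkeeping order — the paper applies the diversity-neutral identity for fixed parents first and invokes $\E(S(y)) = \E(S(y'))$ at the end, whereas you symmetrize the double sum by relabelling before applying the identity — but the underlying argument is identical.
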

\begin{proof}
Let $y=\cross(x_i,x_j)$ and $y' = \cross(x_j,x_i)$, where $x_i$ and $x_j$ are the randomly selected parents. We only need to show that $\E(S(y)) = S(P_t)/\mu$, then the theorem follows from Corollary~\ref{cor:equilibrium-neutral}. By definition of diversity neutral crossover, we have for all $k\in [\mu]$, 
\begin{align*}
    \E(H(y,x_k) + H(y',x_k)) = H(x_i,x_k) + H(x_j,x_k).
\end{align*}
Summing over all $k$ yields $S(y)+S(y')$ inside the expectation on the left hand side, and $S(x_i)+S(x_j)$ on the right hand side. Therefore, 
$
    \E(S(y)+ S(y')) = S(x_i) + S(x_j)$.
Now we use that $x_i$ and $x_j$ are chosen uniformly at random. Hence,
\begin{align}\label{eq:one-step-drift-1}
    \E(S(y)+ S(y')) = \frac{1}{\mu^2}\sum_{i=1}^\mu \sum_{j=1}^\mu (S(x_i) + S(x_j)) = \frac{2S(P_t)}{\mu}.
\end{align}

By the symmetric choice of the parents $x_i$ and $x_j$, $\E(S(y)) = \E(S(y'))$, and thus $\E(S(y)) = \tfrac12(\E(S(y)+ S(y'))) = S(P_t)/\mu$. 
\end{proof}

\begin{remark}\label{rem:no-replacement}
Theorem~\ref{the:equilibrium-crossover} still holds if we choose the parents without replacement. 
\end{remark}
\begin{proof}
     We show that~\eqref{eq:one-step-drift-1} still holds in this case. The rest of the proof carries over.
     In order to choose the parents without replacement, we can first take $x_j$ uniformly at random and $x_i$ can then be every individual except $x_j$. So we obtain
  \begin{align*}
     \E(S(y)+ S(y')) \;& = \frac{1}{(\mu-1)\mu}\sum_{i=1}^\mu \sum_{j=1, j \neq i}^\mu (S(x_i) + S(x_j)) \\
    \;& = \frac{S(P_t)}{\mu} + \frac{1}{(\mu-1)\mu} \sum_{i=1}^\mu \sum_{j=1, j \neq i}^\mu S(x_j) \\
    \;& = \frac{S(P_t)}{\mu} + \frac{(\mu-1)S(P_t)}{(\mu-1)\mu} = \frac{2S(P_t)}{\mu}.
  \end{align*}
     The third equality holds, because we sum up $S(x_j)$ exactly $(\mu-1)$ times for every $j \in [\mu]$. So indeed~\eqref{eq:one-step-drift-1} still holds.
\end{proof}

\added{We assumed in Algorithms~\ref{alg:steady-state-EA} and~\ref{alg:steady-state-GA} that they break ties in favour of the offspring. In flat landscapes this means that the offspring is never discarded. We now transfer our results to variants in which the algorithm may also discard the offspring.
\begin{remark}\label{rem:random-tie-breaking}
If the \muga does not favour the offspring over parents but instead breaks ties uniformly at random, then the conclusion of Theorem~\ref{the:equilibrium-crossover} still holds with~\eqref{eq:the:equilibrium-crossover} replaced by
\begin{align*}
    \E(S(P_{t+1})) =(1 - \tilde\delta )S(P_t) + \tilde\alpha \quad \text{ with } \quad \tilde\delta := \tfrac{\mu}{\mu+1}\delta,\ \tilde\alpha := \tfrac{\mu}{\mu+1}\alpha,
\end{align*}
where $\delta,\alpha$ are as in~\eqref{eq:mu-and-alpha}. In particular, the process has the same equilibrium state $\tilde \alpha/\tilde\delta = \alpha/\delta$ and Theorem~\ref{thm:borders} still holds with $\tilde \delta$ and $\tilde\alpha$ instead of $\delta$ and $\alpha$. Note that the bounds in Theorem~\ref{thm:borders-specific} are increased precisely by a factor $(\mu+1)/\mu$ since the additional factors in the logarithms cancel out. Since $(\mu+1)/\mu = \Theta(1)$, Theorem~\ref{thm:borders-specific} still holds unchanged.
\end{remark}
\begin{proof}
 Let $A_t$ denote the event that the individual which we remove is not the offspring. Note that our results obtained so far always assumed $A_t$. By the law of the total probability,
 \begin{align*}
  \E(S(P_{t+1})) \;&=P(A_t) \cdot \E(S(P_{t+1})\mid A_t) +  P(\bar{A_t}) \cdot \E(S(P_{t+1})\mid \bar{A_t}) \\
  \;&= \frac{\mu}{\mu+1}\E(S(P_{t+1})) + \frac{1}{\mu+1}S(P_t).
  \end{align*}
  Therefore, by~\eqref{eq:the:equilibrium-crossover},
  \begin{align*}
 \E(S(P_{t+1})) \;&= \frac{\mu}{\mu+1}\left(1 - \delta\right)S(P_t) + \frac{\mu}{\mu+1}\alpha + \frac{1}{\mu+1}S(P_t)\\
 \;&= \left(1 - \frac{\mu}{\mu+1}\delta\right)S(P_t) + \frac{\mu}{\mu+1}\alpha. \qedhere
 \end{align*}
 \end{proof}
}
\section{Classifying Diversity-Neutral Crossover Operators}
\label{sec:crossover-operators}


In this section we classify several known crossover operators into \wellbehaved ones and those that are not \wellbehaved. 
\subsection{Structural Results}

We start with structural results that connect \wellbehaved with the properties unbiased, respectful \added{and having order-independent mask (OIM)}, see Section~\ref{sec:preliminaries}. We will show that for unbiased crossover operators, \wellbehaved is equivalent to respectful. However, outside the class of unbiased operators, this is not true. While every \wellbehaved operator is still respectful, we show that the converse is false in general, but holds for the very large class of respectful operators with OIM. 

%



\begin{lemma}
\label{lem:property-one-implies-respectful}
Every \wellbehaved crossover operator is respectful.
\end{lemma}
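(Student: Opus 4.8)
The plan is to prove the statement directly: assuming $c$ is diversity-neutral, I show that for every ordered pair of parents $(x_1,x_2)$ and every coordinate $i$ on which the parents agree, the offspring $c(x_1,x_2)$ reproduces the common bit with probability $1$; by the mask description of Definition~\ref{def:respectful}, this is exactly what it means for a binary operator to be respectful. The starting observation is that Hamming distance decomposes coordinate-wise, $H(y,z)=\sum_{i=1}^n \indic{y_i \neq z_i}$, so the defining identity~\eqref{eq:crossover-property} becomes a sum of per-coordinate identities. Fix the parents $x_1,x_2$ and write $q_i := \Pr(c(x_1,x_2)_i = 1)$ and $\tilde q_i := \Pr(c(x_2,x_1)_i = 1)$.

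First I would isolate a single coordinate. Since~\eqref{eq:crossover-property} holds for \emph{all} $z$, I compare it at two points $z,z'$ that agree everywhere except at coordinate $i$, where $z_i = 0$ and $z'_i = 1$. Subtracting the two instances of~\eqref{eq:crossover-property} makes every summand with index $j \neq i$ cancel on both sides, leaving only the contribution of coordinate $i$. On the left, the coordinate-$i$ term equals $q_i + \tilde q_i$ when $z_i = 0$ and $2-q_i-\tilde q_i$ when $z_i=1$; on the right it equals $(x_1)_i+(x_2)_i$ and $2-(x_1)_i-(x_2)_i$ respectively (reading bits as integers in $\{0,1\}$). The difference of the two instances therefore gives
\[
2 - 2q_i - 2\tilde q_i = 2 - 2(x_1)_i - 2(x_2)_i,
\]
that is, $q_i + \tilde q_i = (x_1)_i + (x_2)_i$ for every coordinate $i$.

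It then remains to read off respectfulness on agreement positions. If $(x_1)_i = (x_2)_i = 0$ the identity forces $q_i + \tilde q_i = 0$, and since $q_i,\tilde q_i \in [0,1]$ this means $q_i = \tilde q_i = 0$; symmetrically, if $(x_1)_i=(x_2)_i=1$ then $q_i+\tilde q_i=2$ forces $q_i=\tilde q_i=1$. In both cases the offspring, for either order of the parents, equals the common parent bit almost surely, so on every agreement position the output lies in the convex hull of the parents. On a disagreement position the set $\{(x_1)_i,(x_2)_i\}$ is all of $\{0,1\}$, so any realised bit is automatically that of some parent; hence a mask consistent with the realised bits always exists, witnessing respectfulness in the sense of Definition~\ref{def:respectful} and completing the proof.

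The one genuinely delicate point is the coordinate-isolation step: the diversity-neutral identity entangles all $n$ coordinates inside a single Hamming distance, and a naive choice of one reference $z$ does not separate them. The difference trick over a one-coordinate perturbation of $z$ is what decouples the coordinates, and it relies essentially on~\eqref{eq:crossover-property} being valid for \emph{every} $z$ rather than for a single point. Everything else is a direct two-case computation using only that $q_i,\tilde q_i$ are probabilities in $[0,1]$.
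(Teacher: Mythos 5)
Your proof is correct and follows essentially the same route as the paper's: both isolate a single coordinate $i$ by evaluating the diversity-neutral identity~\eqref{eq:crossover-property} at two reference points $z,z'$ that differ only in that coordinate and subtracting. The only difference is presentational — you derive the exact per-coordinate relation $q_i + \tilde q_i = (x_1)_i + (x_2)_i$ directly, while the paper phrases the same comparison as a proof by contradiction for an agreement position — so nothing of substance separates the two arguments.
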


\begin{proof}
 Let $x_1,x_2$ be parents that both have a one-bit in position~$i$. Let $\cross$ be a \wellbehaved crossover operator and $\mathcal E$ be the event that the offspring $\cross(x_1,x_2)$ has a zero-bit in position~$i$. We will assume $\Pr(\mathcal E) >0$ and derive a contradiction. The case that both parents have a zero-bit in position~$i$ is handled similarly. 
Suppose that the event $\mathcal{E}$ appears.
Let $z_0$ and $z_1$ be two search points which are identical in all positions except for position~$i$, where $z_0$ has a zero-bit and $z_1$ has a one-bit at position~$i$. Then
\begin{align}\label{eq:proof-of-respectful-1}
    H(x_1,z_0) = H(x_1,z_1) + 1 \quad \text{and} \quad H(x_2,z_0) = H(x_2,z_1) + 1.
 \end{align}
 Moreover, since $z_0$ and $z_1$ differ in exactly one position,
 $H(y,z_0) - H(y,z_1) \in \{-1,1\}$ for all $y\in \{0,1\}^n$. In particular, $H(y,z_0) - H(y,z_1) \le 1$, and it is a strict inequality if and only if $y$ has a zero-bit in position~$i$. For $y=\cross(x_1,x_2)$, this implies
 \begin{align}\label{eq:proof-of-respectful-2}
     \E(H(\cross(x_1,x_2),z_0) - H(\cross(x_1,x_2),z_1)) 
     =\;& \Prob(\mathcal{E}) \cdot (-1) + \Prob(\overline{\mathcal{E}}) \cdot 1 < 1,
 \end{align}
 where the inequality is strict because we have assumed $\Prob(\mathcal{E}) > 0$.
 For $y=\cross(x_2,x_1)$ we obtain
 \begin{align}\label{eq:proof-of-respectful-3}
     \E(H(\cross(x_2,x_1),z_0) - H(\cross(x_2,x_1),z_1)) \le 1,
 \end{align}
 where this time we cannot claim a strict inequality since we have not made any assumption on $\cross(x_2,x_1)$. Adding up \eqref{eq:proof-of-respectful-2} and \eqref{eq:proof-of-respectful-3}, 
 we obtain
 \begin{align*}
    \E(H(\cross(x_1,x_2),z_0) +  H(\cross(x_2,x_1),z_0) 
     & - H(\cross(x_1,x_2),z_1)
     - H(\cross(x_2,x_1),z_1)) < 2.
 \end{align*}
 But the left hand side equals
 \begin{align*}
     H(x_1,z_0) +  H(x_2,z_0)   - H(x_1,z_1)
     - H(x_2,z_1)) \stackrel{\eqref{eq:proof-of-respectful-1}}{=} 2,
 \end{align*}
which is a contradiction to~\eqref{eq:crossover-property}. Hence, the assumption $\Pr(\mathcal E) >0$ must have been false, and therefore the offspring of $x_1$ and $x_2$ must have a one-bit in position~$i$ with probability $1$. 
This concludes the proof.
\end{proof}

Next, we show that the converse is not true.
\begin{lemma}
\label{lem:counterexample-to-respectful-implies-diversity-neutral}
Not every respectful crossover is \wellbehaved.
\end{lemma}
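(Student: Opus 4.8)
The plan is to exhibit an explicit counterexample: a respectful crossover operator that violates the defining equation~\eqref{eq:crossover-property} of diversity-neutrality. The natural candidate, already flagged in the discussion following Definition~\ref{def:mask-based}, is the bitwise \textbf{AND} operator, defined by $\cross(x_1,x_2)_i = (x_1)_i \wedge (x_2)_i$ for every position~$i$. First I would verify that AND is respectful: on any position where both parents agree, the AND of two equal bits returns that common bit, so agreement is preserved and the output lies in the convex hull of the parents; equivalently, one can exhibit a mask $a \in \{1,2\}^n$ selecting a parent carrying a $0$ whenever the parents differ, confirming Definition~\ref{def:respectful}.

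Next I would show that AND fails \eqref{eq:crossover-property}. Since AND is deterministic and symmetric in its two arguments, the left-hand side of~\eqref{eq:crossover-property} collapses to $2\,H(x_1 \wedge x_2,\, z)$, so the identity would require $2\,H(x_1\wedge x_2, z) = H(x_1,z) + H(x_2,z)$ for all $x_1,x_2,z$. The cleanest refutation is to pick a single-bit instance and then pad with zeros. Take $n=1$, let $x_1 = 1$, $x_2 = 0$, and $z = 1$. Then $x_1 \wedge x_2 = 0$, so the left-hand side is $2H(0,1) = 2$, whereas $H(1,1) + H(0,1) = 0 + 1 = 1$. The two sides disagree, so AND is not diversity-neutral. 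For general~$n$ one simply sets all remaining coordinates of $x_1,x_2,z$ to $0$, which contributes equally to both sides and leaves the discrepancy intact.

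I expect the argument to be almost entirely routine; there is no real obstacle, only bookkeeping. The one point requiring a little care is confirming that AND genuinely qualifies as a respectful crossover in the sense the paper uses, rather than being excluded on some technicality — but respectfulness only demands that coordinates on which the parents agree are passed on, which AND trivially satisfies, so this is immediate. A secondary point worth a sentence is to make explicit that the counterexample is consistent with the surrounding narrative: AND is respectful yet, as noted after Definition~\ref{def:mask-based}, has no order-independent mask, so it does not contradict the positive result (to be proved later) that respectful operators \emph{with} an OIM are diversity-neutral. This contrast is precisely the reason AND is the right operator to use here.
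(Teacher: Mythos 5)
Your proposal is correct and takes essentially the same route as the paper: both exhibit the bit-wise AND operator as a respectful crossover and refute~\eqref{eq:crossover-property} directly, differing only in the choice of test vectors (the paper takes complementary parents with $z=\vec{0}$, making the left-hand side $0$ and the right-hand side $n$, while you take $z$ with a one at the differing position, giving $2$ versus $1$). Both counterexamples are valid, and your closing remark that AND lacks an order-independent mask matches the paper's own framing.
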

\begin{proof}

For $x_1,x_2 \in \{0,1\}^n$ we define $\cross(x_1,x_2)$ as the bit-wise AND of $x_1$ and $x_2$. The operator is respectful since $1$ AND $1$ is 1 and 0 AND 0 is 0. 

Now for any two search points $x_1,x_2 \in \{0,1\}^n$ with $x_2 = \overline{x_1}$ and $z = \vec 0$, we have
$H(x_1, z) + H(x_2, z) = n$ as every bit is set to 1 in exactly one parent. However, $\cross(x_1, x_2) = \vec 0$ and so
\[
    \E(H(\cross(x_1,x_2),z) +  H(\cross(x_2,x_1),z)) = 0 \neq H(x_1,z) +  H(x_2,z).
\]
So this crossover is not \wellbehaved.
\end{proof}

The counterexample from Lemma~\ref{lem:counterexample-to-respectful-implies-diversity-neutral} has a strong bias towards setting bits to~0. It is thus not unbiased. 
Now we show that adding OIM gives a sufficient condition to be \wellbehaved. \added{Note that this implies that the AND operator used in the proof of Lemma~\ref{lem:counterexample-to-respectful-implies-diversity-neutral} does not have OIM.}

\begin{lemma}
\label{lem:mask-based-crossovers}
All respectful crossovers with OIM are \wellbehaved. 
\end{lemma}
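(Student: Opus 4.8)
The plan is to exploit the bit-wise additivity of the Hamming distance together with linearity of expectation, so that the whole claim reduces to a statement about a single bit position. Writing $H(y,z) = \sum_{i=1}^n \indic{y_i \neq z_i}$, it suffices to show, for every position $i \in [n]$ and every $z$, that the expected contributions of position $i$ to the two terms on the left-hand side of~\eqref{eq:crossover-property} sum to $\indic{(x_1)_i \neq z_i} + \indic{(x_2)_i \neq z_i}$, which is the contribution of position $i$ to the right-hand side.

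Fix such an $i$ and let $q_i := \Pr(a_i = 1)$ denote the marginal probability, under the mask distribution $M(\cdot, x_1, x_2)$, that the $i$-th offspring bit is copied from the \emph{first} argument. The crucial point is the convention of Definition~\ref{def:respectful}: the mask indexes the slot in the argument tuple, so in $\cross(x_1,x_2)$ the value $a_i=1$ selects $x_1$, whereas in $\cross(x_2,x_1)$ the same value $a_i=1$ selects $x_2$. By OIM, $M(\cdot, x_1, x_2) = M(\cdot, x_2, x_1)$, so the marginal $\Pr(a_i=1)$ equals $q_i$ in both orderings. Hence the $i$-th bit of $\cross(x_1,x_2)$ equals $(x_1)_i$ with probability $q_i$ and $(x_2)_i$ with probability $1-q_i$, while the $i$-th bit of $\cross(x_2,x_1)$ equals $(x_2)_i$ with probability $q_i$ and $(x_1)_i$ with probability $1-q_i$.

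Combining these, the expected contribution of position $i$ to $H(\cross(x_1,x_2),z) + H(\cross(x_2,x_1),z)$ is
\begin{align*}
& q_i \indic{(x_1)_i \neq z_i} + (1-q_i)\indic{(x_2)_i \neq z_i} + q_i \indic{(x_2)_i \neq z_i} + (1-q_i)\indic{(x_1)_i \neq z_i} \\
& \qquad = \indic{(x_1)_i \neq z_i} + \indic{(x_2)_i \neq z_i},
\end{align*}
since the coefficients of each indicator sum to $q_i + (1-q_i) = 1$. Summing over all $i$ and invoking linearity of expectation yields~\eqref{eq:crossover-property}, so $\cross$ is \wellbehaved.

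I expect the only genuine obstacle to be bookkeeping the two conventions correctly and not conflating them: the mask indexes the slot in the argument tuple (so swapping parents swaps which parent $a_i=1$ selects), while OIM guarantees the mask's \emph{distribution} is invariant under that swap. Once these are disentangled, the dependence on $q_i$ cancels in the sum and the identity holds position by position. Note that in positions where the parents agree, respectfulness already forces both offspring bits to the common value; there $\indic{(x_1)_i \neq z_i} = \indic{(x_2)_i \neq z_i}$, and the computation above specialises correctly, so no separate case analysis is needed. Since we only used marginals $\Pr(a_i=1)$, possible correlations across positions in the mask are irrelevant.
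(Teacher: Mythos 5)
Your proof is correct and follows essentially the same route as the paper's: both reduce the claim to a single bit position via linearity of expectation, and both use OIM to equate the probability that the bit is copied from the first argument slot under the two parent orderings, after which the coefficients cancel. The only cosmetic difference is that the paper splits into the cases $(x_1)_i = (x_2)_i$ and $(x_1)_i \neq (x_2)_i$, whereas you observe that the agreement case is subsumed by the same computation.
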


\begin{proof}
We show for all $z \in \{0, 1\}^n$ and for each bit~$i$ that
\[
    \E(|\cross(x_1, x_2)_i - z_i| + |\cross(x_2, x_1)_i - z_i|) = |(x_1)_i-z_i| + |(x_2)_i - z_i|.
\]
Taking the sum over all $i \in [n]$ turns all absolute differences of bits $|a_i-b_i|$ in the above expression into Hamming distances $H(a, b)$, yielding~\eqref{eq:crossover-property}.
If $(x_1)_i=(x_2)_i$ then the equation is immediate since the left hand side simplifies to $\E(|(x_1)_i-z_i| + |(x_2)_i-z_i|)$ (since $\cross$ is respectful) and the expression is deterministic.

If $(x_1)_i=1-(x_2)_i$ then $\cross$ with OIM implies
\[
    \Prob(\cross(x_1, x_2)_i=(x_1)_i) = \Prob(\cross(x_2, x_1)_i=(x_2)_i) \eqqcolon p.
\]
With probability $q \coloneqq 1-p$, $\cross(x_1, x_2)_i=1-(x_1)_i=(x_2)_i$ and $\cross(x_2, x_1)_i=1-(x_2)_i=(x_1)_i$, respectively. Together,
\begin{align*}
    & \E(|\cross(x_1, x_2)_i - z_i| + |\cross(x_2, x_1)_i - z_i|)\\
    =\;& |(x_1)_i - z_i|p + |(x_2)_i - z_i|q + |(x_2)_i - z_i|p + |(x_1)_i - z_i|q\\
    =\;& |(x_1)_i - z_i| + |(x_2)_i - z_i|. \qedhere
\end{align*}
\end{proof} 

%
%
%
%

Recall that \wellbehaved operators are respectful by Lemma~\ref{lem:property-one-implies-respectful}. Hence, the following lemma shows that the converse of Lemma~\ref{lem:mask-based-crossovers} is true for \emph{unbiased} crossover operators. In other words, within the class of unbiased binary operators, the properties \wellbehaved and respectful are equivalent. Outside of this class, Lemma~\ref{lem:counterexample-to-respectful-implies-diversity-neutral} shows that the terms are not equivalent.
\begin{lemma}
Every respectful unbiased crossover has an OIM.
\end{lemma}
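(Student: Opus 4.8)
The plan is to show that for an unbiased respectful crossover $\cross$, there exists an order-independent mask, i.e.\ a description $M(a,x_1,x_2)$ of the mask probabilities satisfying $M(a,x_1,x_2)=M(a,x_2,x_1)$ for all parents and all masks $a$. The key observation is that the mask is only constrained at positions where the two parents \emph{disagree}, since at agreeing positions both parents contribute the same bit and the mask entry is irrelevant (this is exactly the non-uniqueness noted after Definition~\ref{def:respectful}). So I would first reduce the problem to understanding, for each position~$i$ where $(x_1)_i\neq(x_2)_i$, the probability that $\cross(x_1,x_2)_i$ takes the value of parent~$1$ versus parent~$2$, and argue that unbiasedness forces a symmetry between $(x_1,x_2)$ and $(x_2,x_1)$ at such positions.

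The central step is to exploit unbiasedness part (ii) with the XOR operation. Fix parents $x_1,x_2$ and consider the string $z := x_1\oplus x_2$, which has a one exactly at the disagreement positions. Applying the bit-flip symmetry $D(y\mid x_1,x_2)=D(y\oplus z\mid x_1\oplus z, x_2\oplus z)$ with a cleverly chosen $z$ (for instance $z=x_1\oplus x_2$, which swaps the roles of the parents at the disagreement positions while fixing them at the agreement positions) should relate the distribution of $\cross(x_1,x_2)$ to that of $\cross(x_2,x_1)$. Concretely, $x_1\oplus(x_1\oplus x_2)=x_2$ and $x_2\oplus(x_1\oplus x_2)=x_1$, so this substitution literally exchanges the two inputs and simultaneously complements the output at the disagreement positions. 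I would then translate this output-level identity into an identity about masks: the event ``$\cross(x_1,x_2)$ uses mask $a$'' corresponds to a specific output, and the XOR symmetry maps it to ``$\cross(x_2,x_1)$ uses the mask $\bar a$'' (the mask with $1$ and $2$ swapped at disagreement positions). Because swapping the parent labels and swapping the mask labels cancel out at each disagreement position, this yields $M(a,x_1,x_2)=M(a,x_2,x_1)$, which is precisely OIM.

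I expect the main obstacle to be the bookkeeping between the two equivalent languages: the \emph{output distribution} $D(y\mid x_1,x_2)$ on the one hand, and the \emph{mask probabilities} $M(a,x_1,x_2)$ on the other. Since the mask is not unique at agreement positions, I must be careful to either fix a canonical mask (say, always assign the agreement positions to parent~$1$) or to phrase everything purely in terms of the disagreement positions, where there is a clean bijection between outputs and masks. The cleanest route is probably to \emph{define} the order-independent mask directly: restrict attention to disagreement positions, set $M$ there equal to the induced output distribution, and extend arbitrarily (but order-independently, e.g.\ always to parent~$1$) at agreement positions. Then the entire content of the lemma reduces to verifying the single symmetry $\Pr(\cross(x_1,x_2)_i=(x_1)_i)=\Pr(\cross(x_2,x_1)_i=(x_2)_i)$ jointly over all disagreement positions, which is exactly what the XOR-unbiasedness step delivers.

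A secondary subtlety is whether position-permutation unbiasedness (part (i)) is also needed, or whether part (ii) alone suffices. My guess is that part (ii) carries the essential symmetry, since swapping parent roles is fundamentally an XOR-type operation rather than a permutation of positions; part (i) would only be invoked if one needed to argue that the per-position probability $p$ is the \emph{same} across all disagreement positions, which is not actually required for OIM (OIM only demands equality between the two parent orderings, not uniformity across positions). So I would keep the proof focused on part (ii) and avoid overcomplicating it with permutation arguments.
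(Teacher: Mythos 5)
Your proposal is correct and follows essentially the same route as the paper's proof: fix the mask canonically at agreement positions, use the bijection between masks and outputs at disagreement positions, and apply unbiasedness part (ii) with $w = x_1 \oplus x_2$ to swap the parents, where the parent-swap and the output-complement cancel at disagreement positions to give $M(a, x_1, x_2) = M(a, x_2, x_1)$. Your secondary observation is also confirmed by the paper: its proof invokes only the XOR-invariance (ii), never the permutation-invariance (i).
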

\begin{proof}
Let $x_1,x_2$ be parents for a respectful, unbiased crossover operator $\cross$ with a corresponding probability distribution $D(y \mid x_1,x_2)$ where the condition is meant to be understood that $x_1$ is the first parent and $x_2$ is the second parent. Let $I_{\textnormal{diff}}$ be the set of components of $x_1,x_2$ which differ, i.e. $I_{\textnormal{diff}}:=\{i \in \{1,\dots,n\} \mid (x_1)_i \neq (x_2)_i\}$. Let $I_{\textnormal{eq}}$ be the set of components of $x_1,x_2$ which are equal, i.e. $I_{\textnormal{eq}}:=\{1,\dots,n\} \setminus I_{\textnormal{diff}}$. 

We show that $\cross$ can be described as a respectful crossover with a mask created according to a probability distribution $M(a, x_1, x_2)$ which is order-independent. For bits $i \in I_{\textnormal{eq}}$ the mask is irrelevant since $\cross$ is respectful, and we (arbitrarily) define $a_i \coloneqq 1$.
For $y \in \{0,1\}^n$ with $D(y \mid x_1,x_2)>0$ choose a mask $a=(a_1,\dots,a_n) \in \{1,2\}^n$ with probability $D(y \mid x_1,x_2)$ in the following way.
For bits $i \in I_{\textnormal{diff}}$ we choose $a_i$ as the unique value from $\{1, 2\}$ such that $(x_{a_i})_i = y_i$. This is possible since $i \in I_{\textnormal{diff}}$ implies $\{(x_1)_i, (x_2)_i)\} = \{0, 1\}$. 
Applying the mask to $x_1$ and $x_2$ creates $y$. Since the corresponding mask is chosen with probability $D(y \mid x_1,x_2)$, each $y$ is created with probability $D(y \mid x_1,x_2)$. Hence $\cross$ is respectful.

It is left to show that the choice of the mask does not depend on the order of the parents for crossover.
Define $w \in \{0,1\}^n$ as $w_i=0$ if $i \in I_{\textnormal{eq}}$ and $w_i=1$ otherwise. Then we obtain $x_1 \oplus w = x_2$ and $x_2 \oplus w = x_1$. Since $\cross$ is unbiased we have
\[
    D(y \mid x_1, x_2) = D(y \oplus w \mid x_1 \oplus w, x_2 \oplus w) = D(y \oplus w \mid x_2, x_1).
\]
So it is left to show the following. Let $a \in \{1,2\}^n$. If we obtain $y \in \{0,1\}^n$ with the mask $a$ applied to $(x_1,x_2)$ then we obtain $y \oplus w$ with the same mask $a$ applied to $(x_2,x_1)$. Let $i \in \{1,\dots,n\}$.

If $i \in I_{\textnormal{eq}}$ then applying the mask $a$ to $(x_1,x_2)$ gives $y_i=(x_1)_i$. Note that $(y \oplus w)_i=y_i = (x_1)_i = (x_2)_i$ which is also the $i$-th component of the offspring if we apply the mask $a$ to $(x_2,x_1)$.

If $i \in I_{\textnormal{diff}}$ then applying the mask $a$ to $(x_1,x_2)$ gives $y_i=(x_{a_i})_i$. If we apply $a$ to $(x_2,x_1)$ we obtain $1-(x_{a_i})_i$ for the $i$-th bit of the offspring, which equals $(y \oplus w)_i$ (since $(x_1)_i$ and $(x_2)_i$ differ).
\end{proof} 

\subsection{Classifying Known Crossover Operators}\label{sec:classification}

We now give examples of \wellbehaved crossover operators, based on~\cite{Friedrich2022}. 
By Lemma~\ref{lem:mask-based-crossovers} it suffices to show that a crossover is respectful with OIM. 
For uniform crossover and $k$-point crossover, this is trivially true as they are based on masks that are chosen independently from the parents. The same holds for the boring crossover (recall that it simply returns one of the parents uniformly at random) as the mask is chosen uniformly from $\{\vec 1, \vec 2\}$.

\emph{Shrinking crossover}~\cite{Chen2006} computes a mask by starting with a window $[\ell, r] = [1, n]$
and then shrinking this window by increasing $\ell$ and/or decreasing $r$ until the substring $x_1[\ell, r]$ has the same number of ones as $x_2[\ell, r]$. Then it swaps these two substrings. The creation of the mask treats both parents symmetrically.

\emph{Balanced uniform crossover}~\cite{Friedrich2022} is respectful as it copies bit values on which both parents agree. If the parents differ in $k$ positions, it chooses values for these bits uniformly at random from all substrings that have exactly $\lfloor k/2 \rfloor$ ones at these positions. The order of parents is irrelevant, hence the crossover has OIM.

Hence, we have shown the following theorem.
\begin{theorem}
The following crossovers 
are \wellbehaved:
\begin{enumerate}[nosep]
\item Uniform crossover with arbitrary crossover bias
\item $k$-point crossover for all $k$
\item Boring crossover
\item Shrinking crossover
\item Balanced uniform crossover 
\end{enumerate}
\end{theorem}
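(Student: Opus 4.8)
The plan is to invoke Lemma~\ref{lem:mask-based-crossovers}, which tells us that any respectful crossover with an order-independent mask (OIM) is \wellbehaved. Thus for each of the five operators it suffices to verify two things: that the operator is respectful (it passes on to the offspring any bit value on which both parents agree, equivalently its output lies in the convex hull of the parents), and that it admits a mask whose distribution is invariant under swapping the two parents. For several of these operators the argument is essentially immediate, so the bulk of the work is simply bookkeeping across the five cases rather than a single hard argument.

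First I would dispatch the three operators whose mask is chosen \emph{independently of the parents}, namely uniform crossover (with any fixed bias $c$), $k$-point crossover, and boring crossover. For each of these the mask distribution $M(a,x_1,x_2)$ does not reference $x_1,x_2$ at all, so order-independence $M(a,x_1,x_2)=M(a,x_2,x_1)$ holds trivially; and each is respectful because a mask-based operator copies bit~$i$ from one parent, which coincides with the shared value whenever the parents agree at position~$i$. (For boring crossover the mask is drawn uniformly from $\{\vec 1,\vec 2\}$.) This covers items 1--3.

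Next I would handle the two operators whose mask \emph{does} depend on the parents: shrinking crossover and balanced uniform crossover. For each, respectfulness is clear by construction---shrinking crossover only swaps a window, leaving agreeing bits untouched, and balanced uniform crossover explicitly copies every bit on which the parents agree. The crux is OIM. For shrinking crossover the window $[\ell,r]$ is determined by the condition that $x_1[\ell,r]$ and $x_2[\ell,r]$ contain the same number of ones, a symmetric condition in the two parents, so the induced mask distribution is unchanged under swapping $x_1$ and $x_2$. For balanced uniform crossover, the disagreement positions are the same set regardless of order, and the rule ``choose uniformly among assignments with exactly $\lfloor k/2\rfloor$ ones on these positions'' does not reference which parent is first; hence the mask distribution is order-independent.

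The main obstacle, such as it is, lies in the two parent-dependent operators, since there one must genuinely check that the \emph{distribution} over masks (not merely the set of reachable offspring) is symmetric in the parents---this is where the AND counterexample of Lemma~\ref{lem:counterexample-to-respectful-implies-diversity-neutral} failed. For shrinking crossover I would make precise that the shrinking procedure depends on $x_1,x_2$ only through the symmetric one-counts of the two substrings, and for balanced uniform crossover that the uniform choice is over a set $\{a\in\{1,2\}^n\}$ defined purely by the disagreement set and a balance constraint, both symmetric. Once OIM is established in each of the five cases, the theorem follows directly by applying Lemma~\ref{lem:mask-based-crossovers} operator by operator.
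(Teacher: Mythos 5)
Your proposal is correct and takes essentially the same approach as the paper: both invoke Lemma~\ref{lem:mask-based-crossovers} and verify, operator by operator, that each crossover is respectful with an order-independent mask --- trivially for the parent-independent masks of items 1--3, and via the symmetry of the mask construction for shrinking and balanced uniform crossover.
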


We mention some crossover operators that are not diversity neutral. For details we refer to~\cite{Friedrich2022} and the original papers.

\begin{sloppy}
\emph{Alternating crossover}~\cite{Meinl2009} on $x_1$ and $x_2$ proceeds as follows. If $x_1$ has ones at positions $i_1, \dots, i_k$ and $x_2$ has ones at positions $j_1, \dots, j_{k'}$, then for $k^\ast \coloneqq \min\{k, k'\}$ alternating crossover produces a sorted sequence $s_1, \dots, s_{2k^\ast}$ of these positions. It outputs a search point that has ones at positions $s_1, s_3, s_5, \dots, s_{2k^\ast-1}$.
\end{sloppy}

\emph{Counter-based crossover}~\cite{Manzoni2020} is a variant of uniform crossover ensuring that the offspring has the same number of ones as~$x_1$. It 
creates an offspring bit by bit, choosing values from $x_1$ and $x_2$ uniformly at random, but stopping once the offspring contains $\ones{x_1}$ ones or $\zeros{x_1}$ zeros. In this case a suffix of all-zeros or all-ones, resp., is appended to obtain a bit string of length~$n$ with $\ones{x_1}$ ones.

\emph{Zero length crossover}~\cite{Manzoni2020} uses a different representation: a search point $x$ with $\ones{x}=k$ and $x = 0^{a_1} 1 0^{a_2} 1 \dots 0^{a_k} 1 0^{a_{k+1}}$ is encoded as a vector of runs of zeros: $[a_1, a_2, \dots, a_{k+1}]$. The crossover operator combines encodings from both parents by choosing run lengths in between the run lengths found in both parents.

\emph{Map-of-ones-crossover}~\cite{Manzoni2020} uses an array that contains all indices of 1-bits to represent a bit string. The crossover operator then chooses indices from a randomly chosen parent. In a sense, map-of-ones crossover is a uniform crossover on the map-of-ones representation. 

\emph{Balanced two-point crossover}~\cite{Meinl2009} resembles a two-point crossover on the same representation. It randomly generates two cutting points $u \le v$ and then it takes the first $u-1$ entries of the map-of-ones of $x_1$, the entries at positions $u \dots v$ from the map-of-ones of $x_2$ and the remaining entries from position $v+1$ from $x_1$ again. Any duplicate entries are removed and replaced by entries from the positions $u \dots v$ in the map-of-ones of $x_1$.

\begin{theorem}
The following crossovers are \emph{not} \wellbehaved:
\begin{enumerate}[nosep]
    \item Alternating crossover 
    \item Counter-based crossover
    \item Zero length crossover
    \item Map-of-ones crossover
    \item Balanced two-point crossover
    \item Bit-wise AND and bit-wise OR
\end{enumerate}
\end{theorem}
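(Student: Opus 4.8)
The plan is to treat two tools separately and dispatch each operator with whichever is cleaner. The first and primary tool is the contrapositive of Lemma~\ref{lem:property-one-implies-respectful}: every \wellbehaved crossover is respectful, so any operator that fails to be respectful is automatically \emph{not} \wellbehaved. To show non-respectfulness it suffices to exhibit two parents that agree on some bit~$i$ together with an offspring, arising with positive probability, whose $i$-th bit differs from the common parental value. The second tool, needed only for the operators that happen to be respectful, is to exhibit a single triple $(x_1,x_2,z)$ violating the defining equation~\eqref{eq:crossover-property}.

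Five of the six items I would settle via non-respectfulness. For \emph{counter-based crossover}, take $x_1=01$ and $x_2=11$, which agree on a one-bit at position~$2$; since the offspring must contain $\ones{x_1}=1$ one, sampling the one-bit of $x_2$ at position~$1$ (probability $1/2$) already reaches this count, so the process stops and appends a zero, returning $10$ and violating respectfulness at position~$2$. For \emph{zero length crossover}, take $x_1=10000$ and $x_2=00001$, which agree on zeros at positions~$2,3,4$; choosing the leading run length strictly between $0$ and $4$ (e.g.\ producing $00100$) places a one-bit on a commonly-zero position. For \emph{map-of-ones crossover}, take $x_1=110$ and $x_2=011$, which agree on a one-bit at position~$2$; the index-wise uniform recombination of the maps $[1,2]$ and $[2,3]$ can return $[1,3]=101$, again flipping the agreed bit. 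For \emph{alternating} and \emph{balanced two-point crossover} I would use analogous witnesses with parents of \emph{different} Hamming weight, where the weight-truncation (resp.\ the duplicate-removal rule) discards or relocates an index on which both parents agree.

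The remaining two operators, \emph{bit-wise AND} and \emph{bit-wise OR}, are respectful, so the first tool does not apply and I fall back on direct counterexamples to~\eqref{eq:crossover-property}. For AND this is precisely Lemma~\ref{lem:counterexample-to-respectful-implies-diversity-neutral}. For OR I would exploit the complementation duality $x_1\vee x_2=\overline{\overline{x_1}\wedge\overline{x_2}}$: since $H(\bar a,\bar b)=H(a,b)$ and complementation is a bijection on $\{0,1\}^n$, a violation of~\eqref{eq:crossover-property} for AND at $(\overline{x_1},\overline{x_2},\bar z)$ transfers to a violation for OR at $(x_1,x_2,z)$. Concretely, with $x_2=\overline{x_1}$ and $z=\vec 1$ both orders yield $\cross(x_1,x_2)=\vec 1$, so the left-hand side of~\eqref{eq:crossover-property} is $0$ while the right-hand side is $\zeros{x_1}+\ones{x_1}=n$.

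I expect the main obstacle to be the representation-based operators — \emph{zero length}, \emph{map-of-ones}, and especially \emph{balanced two-point crossover}. Their definitions are only sketched here and deferred to~\cite{Friedrich2022}, so the real work is making them precise enough to guarantee that the claimed non-respectful outputs occur with positive probability, and to settle the under-specified edge cases where the two parents carry different numbers of ones. The AND/OR cases and the weight-preserving counter-based case are, by contrast, routine once the right witness is written down.
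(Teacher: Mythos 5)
Your overall architecture is the same as the paper's: both arguments dispatch the representation-based operators through the contrapositive of Lemma~\ref{lem:property-one-implies-respectful} and settle AND/OR by exhibiting a direct violation of~\eqref{eq:crossover-property}. Within that skeleton you are actually more self-contained than the paper on items 2--4: the paper merely cites~\citet{Friedrich2022} for non-respectfulness of counter-based, zero-length, map-of-ones and balanced two-point crossover, whereas your explicit witnesses (parents $01,11$ for counter-based; $10000,00001$ for zero-length; $110,011$ for map-of-ones) are correct under the definitions sketched in the paper. Your treatment of item 6 also checks out: AND is exactly Lemma~\ref{lem:counterexample-to-respectful-implies-diversity-neutral}, and your complementation duality for OR (with $x_2=\overline{x_1}$, $z=\vec 1$, giving left-hand side $0$ against right-hand side $n$) is the same ``analogous'' argument the paper leaves implicit.

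The genuine gaps are items 1 and 5, which you defer to ``analogous witnesses'' without constructing them. For alternating crossover your route (non-respectfulness via unequal Hamming weights) can be made to work --- e.g.\ $x_1=001$, $x_2=111$ gives $k^\ast=1$, truncated sorted sequence $[1,2]$, offspring $100$, destroying the common one-bit at position~3 --- but it hinges on the precise truncation rule, which you yourself flag as under-specified; you never pin down a witness. The paper sidesteps this entirely with a direct counterexample to~\eqref{eq:crossover-property} using \emph{equal-weight} parents, where no truncation ambiguity arises: $\cross(110,101)=\cross(101,110)=110$ deterministically, and with $z=110$ the left-hand side of~\eqref{eq:crossover-property} is $0$ while the right-hand side is $2$. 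For balanced two-point crossover your proposal contains only the one-phrase idea that ``the duplicate-removal rule'' breaks respectfulness; no parents, no offspring, no verification. The paper closes this case by citation to the non-respectfulness result in~\citet{Friedrich2022}, an option not available to a self-contained proof. As written, these two items remain unproved.
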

\begin{proof}
An alternating crossover of $110$ and $101$ creates a sorted sequence of indices $[1, 1, 2, 3]$ and the offspring $110$, irrespective of the order of the parents. For $z \coloneqq 110$, the left-hand side of~\eqref{eq:crossover-property} is $\E(H(\cross(110, 101), 110) + H(c(101, 110), 110)) = \E(2H(110, 110)) = 0$ and the right-hand side is $H(110,110)+H(101,110)=2 \neq 0$.

Crossovers \textit{(2)-(5)} were shown not to be respectful in \citet{Friedrich2022}, thus by the contraposition of Lemma~\ref{lem:property-one-implies-respectful} they are not \wellbehaved.
Bit-wise AND was shown not to be \wellbehaved in the proof of Lemma~\ref{lem:counterexample-to-respectful-implies-diversity-neutral}; bit-wise OR is analogous. 
\end{proof}

\section{Conclusions and Future Work}
\label{sec:conclusions}

We have shown that it is possible to understand the dynamics of population diversity in flat fitness environments in a very general sense, and that it is surprisingly unaffected by most specifics of the algorithm. Of course, our study is only the first step. Possible extensions would include other classes of algorithms like generational GAs
or the effect of diversity-enhancing mechanisms~\cite{sudholt2020benefits} on the dynamics, in particular on the equilibrium state. Note that it is not clear a priori that such a state exists, since the dynamics might be too complex to reduce them to a single number.
Future work could also try to establish connections with population genetics, where the \muea is known as Moran model~\cite{paixao_unified_2015} (cf.\ the discussion at the end of Section~\ref{sec:motivation-for-flat-landscapes}).


The most pressing question is how the dynamics change with selective pressure. We conjectured that for ``reasonable'' situations, the diversity for flat fitness functions is an upper bound on the diversity for non-flat functions. Can this be made precise?
For which non-flat fitness functions can we still characterise how the population diversity evolves over time? These questions have important theoretical and practical implications, yet they are wide open.


\section*{Acknowledgements}
%
This work originated at Dagstuhl seminar 22081 ``Theory of Randomized Optimization Heuristics'' and benefited from Dagstuhl Seminar 22182 ``Estimation-of-Distribution Algorithms: Theory and Applications''.
We thank Jon Rowe and Duc-Cuong Dang for useful discussions and pointers to the literature.

\bibliographystyle{abbrvnat}
\bibliography{references}

\clearpage

\newpage

%
%

\end{document}